\newlength{\minsqueezelength}
\newcommand{\vsqueeze}[1]{} 
\definecolor{MyPurple}{rgb}{0.5,0,0.5}
\definecolor{MyDarkRed}{rgb}{0.5,0,0.1}
\definecolor{MyDarkBlue}{rgb}{0.1,0.1,0.5}
\definecolor{MyDarkGreen}{rgb}{0.1,0.5,0.1}
\definecolor{MyRed}{rgb}{1.0,0,0}
\definecolor{MyBlue}{rgb}{0,0,1.0}
\definecolor{MyGreen}{rgb}{0,0.8,0}
\definecolor{lightgray}{rgb}{0.96,0.96,0.96}
\definecolor{darkgray}{rgb}{0.4,0.4,0.4}
\definecolor{gray}{rgb}{0.6,0.6,0.6}
\newcommand{\mysection}[1]{\section{{#1}}}  
\newcommand{\mysubsection}[1]{\subsection{{#1}}}  
\newcommand{\mysubsubsection}[1]{\subsubsection{{#1}}} 
\newcommand{\changed}[1]{{#1}} 
\newcommand{\changedJ}[1]{{#1}} 
\newcommand{\revA}[1]{{#1}}
\newcommand{\revB}[1]{{#1}}
\newcommand{\ldq}{\text{``}}
\newcommand{\rdq}{\text{''}}
\newcommand{\mymathsf}[1]{\mathsf{\mathbf{#1}}}
\newtheorem{lemma}{Lemma}
\newtheorem{corollary}{Corollary}
\newtheorem{proposition}{Proposition}
\newtheorem{theorem}{Theorem}
\newtheorem{construction}{Construction}
\newenvironment{proof}
{\begin{quote} \noindent\textit{Proof.}~}
{\newline \phantom{qed.} \hfill\ensuremath{\square}\end{quote}}
\newenvironment{proofsketch}
{\begin{quote} \noindent\textit{Sketch of proof.}~}
{\newline \phantom{qed.} \hfill\ensuremath{\square}\end{quote}}
\newcommand\footnoteref[1]{\protected@xdef\@thefnmark{\ref{#1}}\@footnotemark}
\begin{document}

\title 
{Path Homotopy Invariants and their Application to Optimal Trajectory Planning}

\author{
	Subhrajit Bhattacharya \\
	{\small Department of Mechanical Engineering } \\  {\small and Mechanics, Lehigh University, PA.} \\
	{\small E-mail: sub216@lehigh.edu }
\and
        Robert Ghrist \\
        {\small Department of Mathematics, } \\  {\small University of Pennsylvania, PA.}\\
        {\small E-mail: ghrist@math.upenn.edu}
}

\date{}



\maketitle

\begin{abstract}
We consider the problem of optimal path planning in different homotopy classes in a given environment.
Though important in robotics applications, path-planning with reasoning about homotopy classes of trajectories has typically focused on subsets of the Euclidean plane in the robotics literature.
The problem of finding optimal trajectories in different homotopy classes in more general configuration spaces (or even characterizing the homotopy classes of such trajectories) can be difficult.
In this paper we propose automated solutions to this problem in several general classes of configuration spaces by constructing presentations of fundamental groups and giving algorithms for solving the \emph{word problem} in such groups.
We present explicit results that apply to knot and link complements in 3-space, discuss how to extend to cylindrically-deleted coordination spaces of arbitrary dimension, \changedJ{and also present results in the coordination space of robots navigating on an Euclidean plane}.
\footnote{Parts of this paper has appeared in the proceedings of the IMA Conference on Mathematics of Robotics (IMAMR), 2015.}
\end{abstract}


\mysection{Introduction}

In the context of robot motion planning, one often encounters problems requiring optimal trajectories (paths) in different homotopy classes. For example, consideration of homotopy classes is vital in planning trajectories for robot teams separating/caging and transporting objects using a flexible cable~\citep{cable:separation:IJRR:14}, or in planning optimal trajectories for robots that are tethered to a base using a fixed-length flexible cable~\citep{ICRA:14:tethered}, or in human-robot collaborative exploration problems in context of search-and-rescue missions~\citep{DARS:14:HRI}. This paper addresses the problem of optimal path planning with homotopy class as the optimization constraint. 

There is, certainly, a large literature on minimal path-planning in computational geometry (for a brief sampling and overview, see \citep{MSNew04}).
\revB{The topology of a configuration space plays a key role in solving path-planning problems. To that end, the study of the topological invariants of robot configuration spaces and the properties of motion planning problems in such configuration spaces is not new~\citep{Farber:top:complexity,COSTA:planning,Farber:moving:obstacle,cohen:tori}. However, most of this body of research has primarily focused on deduction of topological invariants of configuration/coordination spaces, addressed existence questions, have studied properties of the motion planning problem itself, but do not explicitly implement algorithms for finding optimal paths in configuration spaces of robots.}
\revA{Classification of paths based on homotopy has \revB{also} been studied in the robotics literature in two-dimensional planar workspaces using geometric methods \citep{Homotopy:Grigoriev:98,Hershberger91computingminimum}, probabilistic road-map construction \citep{homotopy:Schmitzberger:02} techniques and triangulation-based path planning \citep{Buro-TRAstar}.
While in a planar configuration space such methods can be used for determining whether or not two trajectories belong to the same homotopy class, efficient planning for least cost trajectories with homotopy class constraints is difficult using such representations even in $2$-dimensions.
To that end in prior work we developed a graph search-based method for computing shortest paths in different \emph{homology} classes in $2$, $3$ and higher dimensional Euclidean spaces with obstacles~\citep{planning:AURO:12,homology:amai:13}, and in different \emph{homotopy} classes in planar configuration spaces with obstacles~\citep{cable:separation:IJRR:14}.
}

Of course, since the problem of computing shortest paths (even for a 3-d simply-connected polygonal domain) is NP-hard \citep{CRNew87}, we must restrict attention to subclasses of spaces, even when using homotopy path constraints. \revA{In particular, we construct a discrete graph representation of a configuration space and thus reduce the problem to shortest path computation on the graph.
This representation, along with path homotopy invariants of the configuration space, allows us to compute shortest paths in different homotopy classes using graph search-based path planning algorithms.}

This paper focuses on \revA{computation of optimal paths in different homotopy classes in} two interesting and completely different types of configuration spaces: (1) knot and link complements in 3-d; and (2) cylindrically-deleted coordination spaces \citep{GL:2006}.
\revA{Compared to our prior conference publication~\citep{Homotopy-Planning:IMAMR:15}, in this paper we have (i) provided a rigorous Proposition on the presentation of the fundamental group of knot/link complements in $3$-d and hence the justification behind the proposed homotopy invariants in such spaces, (ii) have provided an explicit presentation of the fundamental group of cylindrically deleted coordination space for robots navigating on a plane, and (iii) hence have demonstrated through simulation the problem of optimal path computation in different homotopy classes for three robots navigating on a simple planar domain.}

\smallskip


\mysection{Configuration Spaces with Free Fundamental Groups}

\revA{
\mysubsection{Preliminaries}

In this section we introduce a few fundamental definitions and concepts. Each definition is accompanied by a reference to a standard text on the subject, where the reader can find more details on these topics.

\emph{Homotopy Classes of Paths:}
Consider oriented/directed curves in a topological space, $X$. Two curves $\gamma_1,\gamma_2:[0,1]\rightarrow X$ connecting the same start and end points, $q_s,q_e\in X$, are called homotopic (or belonging to the same \emph{homotopy class}) if one can be continuously deformed into the other without intersecting any obstacle (\emph{i.e.}. there exists a continuous map $\eta: [0,1]\times [0,1] \rightarrow X$ such that $\eta(\alpha,0)=\gamma_1(\alpha),~~\eta(\alpha,1)=\gamma_2(\alpha)~\forall\alpha\!\in\![0,1]$, and $\eta(0,\beta)=q_s, \eta(1,\beta)=q_e~\forall\beta\!\in\![0,1]$ \citep{Hatcher:AlgTop}).
A set of all homotopically equivalent paths (\emph{i.e.}, homotopic paths) constitute a homotopy class. We denote the homotopy class of a path $\gamma$ a $[\gamma]$.

\emph{Fundamental Group:}
The \emph{fundamental group} or the \emph{first homotopy group} of a topological space, $X$, is the set of all homotopy classes of oriented closed loops (paths with $q_s=q_e=:q_0$) in $X$ with a group structure imposed on the set as follows: i. The identity element is the class of all loops that can be contracted/homotoped to the point $q_0$; ii. The inverse of a homotopy class, $[\gamma]$, is the homotopy class of loops constituting of the same loops as in $[\gamma]$, but with reversed orientation, and is denoted as $[-\gamma]$ or $[\gamma]^{-1}$; iii. The composition (group operation) of two classes $[\gamma_1]$ and $[\gamma_2]$ is the class of loops that are obtained by concatenating a curve in $[\gamma_1]$ with a curve in $[\gamma_2]$ (\emph{i.e.}, it is the class of the loop $\gamma(t) = {\small \left\{ \begin{array}{l} \tiny \gamma_1(2t),~0\leq t < \frac{1}{2} \\ \tiny \gamma_2(2t-1),~\frac{1}{2} \leq t \leq 1 \end{array} \right.}$).

\emph{Free Group and Free Product of Groups:}
A \emph{free group} over a set of letters/symbols is the group whose elements consists of all \emph{words} constructed out of the letters and their formal \emph{inverses}, with identity element being the \emph{empty word}, and the group operation being word concatenation (with any letter juxtaposed with its inverse reducing to the identity)~\citep{scott1964group}. Given two groups, $G$ and $H$, the free product of the groups is the group of words that can be constructed with all the elements of the groups as the letters. The free product is thus written as $G*H = \{g_1 h_1 g_2 h_2 \cdots | g_i \in G, h_i\in H\}$.
}

\mysubsection{Motivation: Homotopy Invariant in $(\mathbb{R}^2-\mathcal{O})$} \label{sec:r2-simple}

We are interested in constructing computable \emph{homotopy invariants} for trajectories in a configuration space that are amenable to graph search-based path planning. To that end there is a very simple construction for configuration spaces of the form $\mathbb{R}^2-\mathcal{O}$ (Euclidean plane punctured by obstacles)~\citep{Homotopy:Grigoriev:98, Hershberger91computingminimum, tovar:sensor08, cable:separation:IJRR:14, ICRA:14:tethered}:
We start by placing \emph{representative points}, $\zeta_i$, inside the $i^{th}$ connected component of the obstacles, $O_i \subset \mathcal{O}$. We then construct non-intersecting rays, $r_1,r_2,\cdots,r_m$, emanating from the representative points (this is always possible, for example, by choosing the rays to be parallel to each other). Now, given a curve $\gamma$ in $\mathbb{R}^2-\mathcal{O}$, we construct a \emph{word} by tracing the curve, and every time we cross a ray $r_i$ from its right to left, we insert the letter ``$r_i$'' into the word, and every time we cross it from left to right, we insert a letter ``$r_i^{-1}$'' into the word, with consecutive $r_j$ and $r_j^{-1}$ canceling each other. The word thus constructed is written as $h(\gamma)$. For example, in Figure~\ref{fig:rays}, $h(\gamma) = \ldq r_1^{-1} r_4~ r_2^{-1} r_4^{-1} r_4~ r_4^{-1} r_6^{-1} \rdq = \ldq r_1^{-1} r_4~ r_2^{-1} r_4^{-1} r_6^{-1} \rdq$.
This word, called the \emph{reduced word} for the trajectory $\gamma$, is a complete homotopy invariant for trajectories connecting the same set of points. That is, $\gamma_1, \gamma_2 : [0,1] \rightarrow (\mathbb{R}^2-\mathcal{O})$, with $\gamma_i(0) = q_s, \gamma_i(1) = q_e$ are homotopic if and only if $h(\gamma_1) = h(\gamma_2)$.

\begin{figure}
  \begin{center}
  \subfigure[With \emph{rays} as the $U_i$'s: $h(\gamma) = \ldq r_1^{-1} r_4~ r_2^{-1} r_4^{-1} r_6^{-1} \rdq$.]{  \label{fig:rays}
    \hspace{0.01in} \includegraphics[width=0.35\textwidth, trim=0 0 0 0, clip=true]{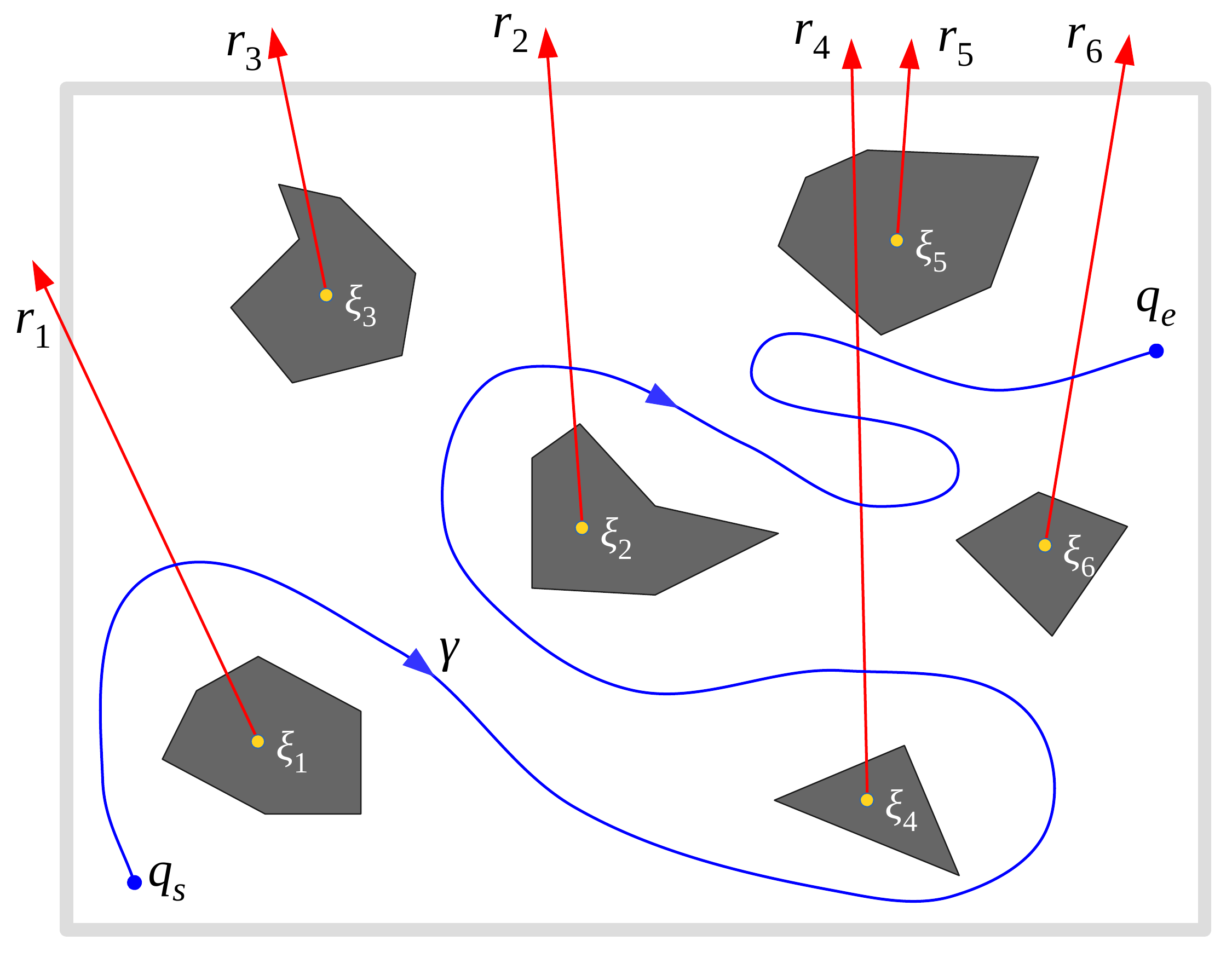} \hspace{0.01in}
    } \hspace{0.1in}
  \subfigure[With a different choice of the $U_i$'s: $h(\gamma) = \ldq u_1^{-1} u_6~ u_6^{-1} u_2~ u_3~ u_5^{-1} \rdq = \ldq u_1^{-1} u_2~ u_3~ u_5^{-1} \rdq $.]{  \label{fig:rays-2}
    \hspace{0.2in} \includegraphics[width=0.35\textwidth, trim=0 0 0 0, clip=true]{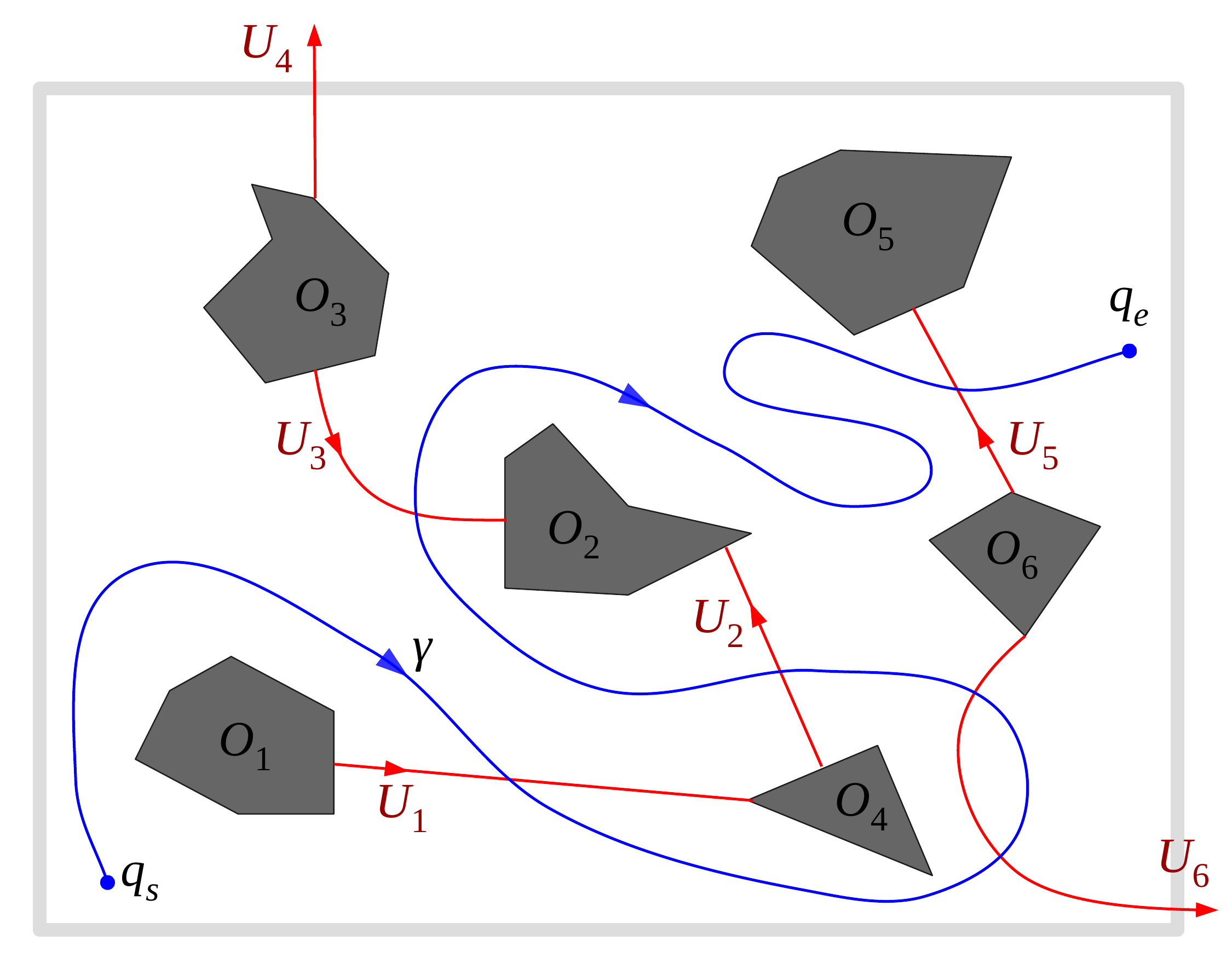} \hspace{0.2in}
    }
  \end{center}
 \caption{Homotopy invariants of curves such as $\gamma$ in $(\mathbb{R}^2-\mathcal{O})$ are words constructed by tracing $\gamma$ and inserting letters in the word for every crossing of the chosen oriented sub-manifolds, $U_i$ (in red).}
\end{figure}


\mysubsection{Words as Homotopy Invariants in Spaces with Free Fundamental Groups}


 In a more general setting the aforesaid construction can be generalized as follows:

\begin{construction} \label{const:free}
Given a $D$-dimensional manifold (possibly with boundary), $X$, suppose $U_1, U_2, \cdots U_n$ are $(D-1)$-dimensional orientable sub-manifolds (not necessarily smooth and possibly with boundaries) such that $\partial U_i \subseteq \partial X$.
 Then, for any curve, $\gamma$ (connecting fixed start and end points, $x_s, x_e\in X$), which is in general position (transverse) w.r.t. the $U_i$'s, one can construct a word by tracing the curve and inserting into the word a letter, $u_i$ or $u_i^{-1}$, whenever
 the curve intersects $U_i$ with a positive or negative orientation respectively.
\end{construction}

The proposition below is a direct consequence of a simple version of the Van Kampen's Theorem (of which several different generalizations are available in the literature).

\begin{proposition} 
\label{prop:van-kampen-simple}
Words constructed as described in Construction~\ref{const:free} are complete homotopy invariants for curves in $X$ joining the given start and end points if the following conditions hold:
 \begin{enumerate}[leftmargin=20pt,labelindent=0pt,itemindent=0pt,labelsep=2pt]
  \item $U_i \cap U_j = \emptyset, ~\forall i\neq j$. \label{item:prop-empty-intersection}
  \item $X- \bigcup_{i=1}^n U_i$ is \changedJ{path connected and} simply-connected, and, \label{item:prop-simply-conn}
  \item $\pi_1(X- \bigcup_{i=1, i\neq j}^n U_i) \simeq \mathbb{Z}, ~\forall j=1,2,\cdots,n$, \label{item:prop-z}
 \end{enumerate}
\end{proposition}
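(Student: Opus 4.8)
The plan is to show that the three hypotheses force $\pi_1(X)$ to be the free group on $n$ generators $u_1,\dots,u_n$, with $u_j$ represented by a small loop crossing $U_j$ once positively, and then to check that the word map $\gamma \mapsto h(\gamma)$ of Construction~\ref{const:free} is exactly this isomorphism read off at the level of based loops. Completeness for paths with fixed endpoints will then follow by the usual reduction to loops.

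First I would set up the open cover for Van Kampen. Fix a basepoint $x_0 \in Y := X - \bigcup_i U_i$ and, assuming each $U_i$ is closed (properly embedded, with $\partial U_i \subseteq \partial X$), put $V_j := X - \bigcup_{i\neq j} U_i$, which is open and contains both $U_j$ and $x_0$. By hypothesis (1) the $U_i$ are pairwise disjoint, so every point lies in at most one $U_i$; hence $\bigcup_j V_j = X$ and, for $j\neq k$, $V_j \cap V_k = Y$, with the same equality for all higher intersections. Hypothesis (2) says $Y$ is path-connected and simply connected, so every double and triple intersection is path-connected with trivial fundamental group. The generalized Van Kampen theorem then gives $\pi_1(X) \cong \pi_1(V_1)*\cdots*\pi_1(V_n)$ with no amalgamating relations, since those relations are indexed by $\pi_1$ of the intersections, which is trivial. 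By hypothesis (3), $\pi_1(V_j) \cong \mathbb{Z}$; writing $u_j$ for a generator identifies $\pi_1(X)$ with the free group on $u_1,\dots,u_n$.

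Next I would pin down the geometric meaning of $u_j$. The signed count of intersections with the cooriented submanifold $U_j$ defines a homomorphism $I_j : \pi_1(V_j) \to \mathbb{Z}$; it is a homotopy invariant by the standard cobordism argument (a generic homotopy meets $U_j$ in a compact $1$-manifold whose signed boundary count vanishes, using $\partial U_j \subseteq \partial X$ to keep crossings from escaping), and it is surjective because a short arc through a tubular neighborhood of $U_j$, closed up by a path in $Y$, crosses $U_j$ exactly once. A surjection $\mathbb{Z} \to \mathbb{Z}$ is an isomorphism, so $u_j$ may be taken to be the class of such a single positive crossing. Finally, for a loop $\gamma$ in general position meeting $\bigcup_i U_i$ at parameters $t_1 < \dots < t_k$ with labels $(i_\ell,\epsilon_\ell)$, I would cut $\gamma$ at points lying in $Y$ between consecutive crossings and join each cut point to $x_0$ by a path in $Y$ (possible since $Y$ is path-connected). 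This writes $[\gamma]$ as the ordered product of based loops $\lambda_\ell$, each lying in a single $V_{i_\ell}$ and crossing $U_{i_\ell}$ once with sign $\epsilon_\ell$, so $[\lambda_\ell] = u_{i_\ell}^{\epsilon_\ell}$ and $[\gamma] = u_{i_1}^{\epsilon_1}\cdots u_{i_k}^{\epsilon_k} = h(\gamma)$, the trivial cancellations occurring automatically in the free group. Thus $\gamma \mapsto h(\gamma)$ is the Van Kampen isomorphism on loops, and two loops are homotopic iff their words agree. Passing to paths with fixed endpoints $x_s \neq x_e$ by concatenating with a fixed reference path from $x_e$ to $x_s$ then yields completeness in general.

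I expect the main obstacle to be this last identification — verifying that the combinatorially defined word map $h$ literally realizes the abstract Van Kampen isomorphism — together with the technical hygiene needed to apply the theorem: openness of the $V_j$, path-connectedness of the triple intersections, and the existence of tubular or collar neighborhoods and a coherent coorientation of each $U_j$ so that the crossing signs, and hence $h$, are well defined in the stated non-smooth generality.
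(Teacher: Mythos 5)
Your proof is correct and follows essentially the same route as the paper: the Seifert--van Kampen theorem applied to the cover by the open sets $V_j = X - \bigcup_{i\neq j} U_i$ (the paper's $X_j$), whose pairwise and triple intersections all equal the simply-connected set $X-\bigcup_i U_i$, giving $\pi_1(X)\cong {*}_{j=1}^n\,\mathbb{Z}$, followed by the identical reduction of fixed-endpoint paths to loops via concatenation with a reversed path. The one difference is thoroughness rather than strategy: where the paper simply asserts that each $\mathbb{Z}$ factor is ``generated due to the restriction of the curve to $X_i$,'' you explicitly verify that the word map $h$ realizes the Van Kampen isomorphism (via the signed-intersection homomorphism $I_j$ and the cut-and-rebase argument), which fills in precisely the step the paper leaves implicit.
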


\begin{proof}
 Consider the spaces $X_0 = X- \bigcup_{i=1}^n U_i$ and $X_j = X- \bigcup_{i=1, i\neq j}^n U_i, ~j=1,2,\cdots,n$. Due to the aforesaid properties of the $U_i$'s the set $\mathcal{C}_X = \{X_0, X_1,\cdots,X_n\}$ constitutes an open cover of $X$, is closed under intersection, the pairwise intersections $X_i \cap X_j = X_0, i\neq j$ are simply-connected (and hence path connected), and so are $X_i \cap X_j \cap X_k = X_0, i\neq j \neq k$.

 The proof, when $\gamma$ is a closed loop (\emph{i.e.} $x_s = x_e$), then follows directly from the Seifert-van Kampen theorem~\citep{Hatcher:AlgTop,Crowell:Kampen:59} by observing that $\pi_1(X) \simeq \pi_1(X_0) * \pi_1(X_1) * \pi_1(X_2) * \cdots * \pi_1(X_n) \simeq {*}_{i=1}^n \mathbb{Z}$, the free product of $n$ copies of $\mathbb{Z}$, each $\mathbb{Z}$ generated
due to the restriction of the curve to $X_i, ~i=1,2,\cdots,n$.

 When $\gamma_1$ and $\gamma_2$ are curves (not necessarily closed) joining points $x_s$ and $x_e$, they are in the same homotopy class iff $\gamma_1 \cup -\gamma_2$ is null-homotopic --- that is, $h(\gamma_1) \circ h(-\gamma_2) = \ldq ~ \rdq \Leftrightarrow h(\gamma_1) = h(\gamma_2)$ (where by ``$\circ$'' we mean word concatenation).
\end{proof}

The Construction~\ref{const:free} gives a \emph{presentation}~\citep{epstein1992word} of the fundamental group of $X$ (which, in this case, is a free group due to the Van Kampen's theorem)
as the group
generated by the set of letters $\mymathsf{U} = \{u_1, u_2, \cdots, u_q\}$, and is written as $G = \pi_1(X) = <\! u_1, u_2, \cdots, u_q \!> = <\! \mymathsf{U} \!>$. 
In our earlier construction with the rays, $X=\mathbb{R}^2 - \mathcal{O}$ was the configuration space, and $U_i = X \cap r_i$ were the support of the rays in the configuration space. It is easy to check that the conditions in the above proposition are satisfied with these choices. However such choices of rays is not the only possible construction of the $U_i$'s satisfying the conditions of Proposition~\ref{prop:van-kampen-simple}. Figure~\ref{fig:rays-2} shows a different choice of the $U_i$'s that satisfy all the conditions.



\mysubsection{Simple Extension to $(\mathbb{R}^3-\mathcal{O})$ with Unlinked Unknotted Obstacles}

The construction described in Section~\ref{sec:r2-simple} can be easily extended to the $3$-dimensional Euclidean space punctured by a finite number of un-knotted and un-linked toroidal (possibly of multi-genus) obstacles. 
Instead of ``rays'', in this case the $U_i$'s are $2$-dimensional sub-manifolds
that satisfy the conditions in Proposition~\ref{prop:van-kampen-simple}, with a letter, $u_i$ (or $u_i^{-1}$), being inserted in $h(\gamma)$ every time the curve, $\gamma$, crosses/intersects a $U_i$. This is illustrated in Figures~\ref{fig:3d-simple} and \ref{fig:3d-simple-genus2}.

However, a little investigation makes it obvious that such $2$-dimensional sub-manifolds cannot always be constructed when the obstacle are knotted or linked (Figure~\ref{fig:knot-empty}). One can indeed construct surfaces (\emph{e.g.} Seifert surfaces) satisfying some of the properties, but not all.

\begin{figure}
  \begin{center}
  \subfigure[The surfaces, $U_i$'s, satisfy the conditions of Proposition~\ref{prop:van-kampen-simple}. $h(\gamma) = \ldq u_2~ u_1^{-1} u_2 \rdq$]{  \label{fig:3d-simple}
    \hspace{0.02in}\includegraphics[width=0.23\textwidth, trim=200 140 180 140, clip=true]{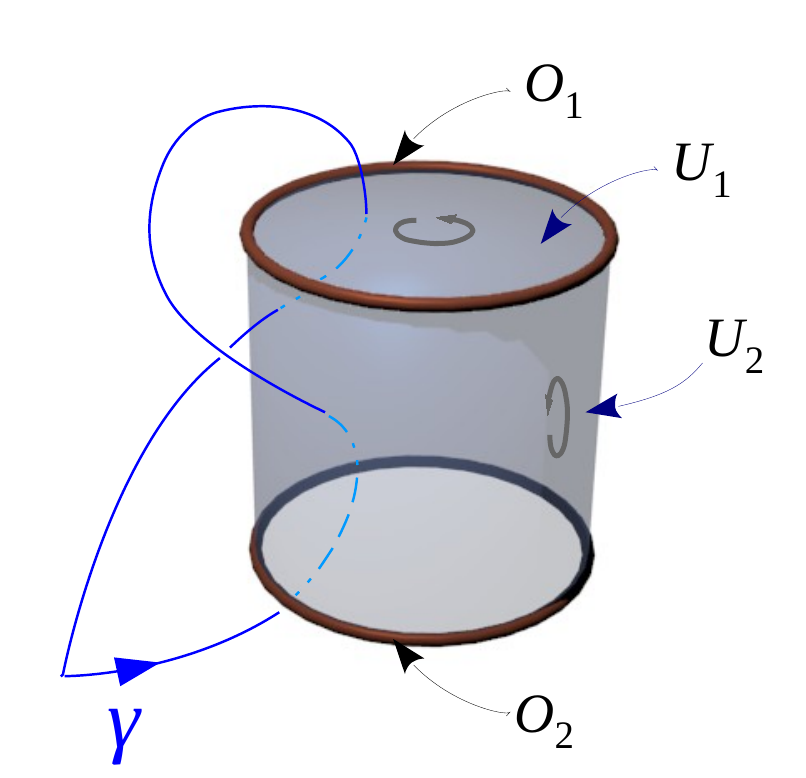} \hspace{0.02in}
    } \hspace{0.01in}
  \subfigure[With a genus-$2$ unknotted obstacle, one can still find, $U_i$'s satisfying the conditions of Proposition~\ref{prop:van-kampen-simple}.]{  \label{fig:3d-simple-genus2}
    \hspace{0.2in}\includegraphics[width=0.23\textwidth, trim=120 100 80 80, clip=true]{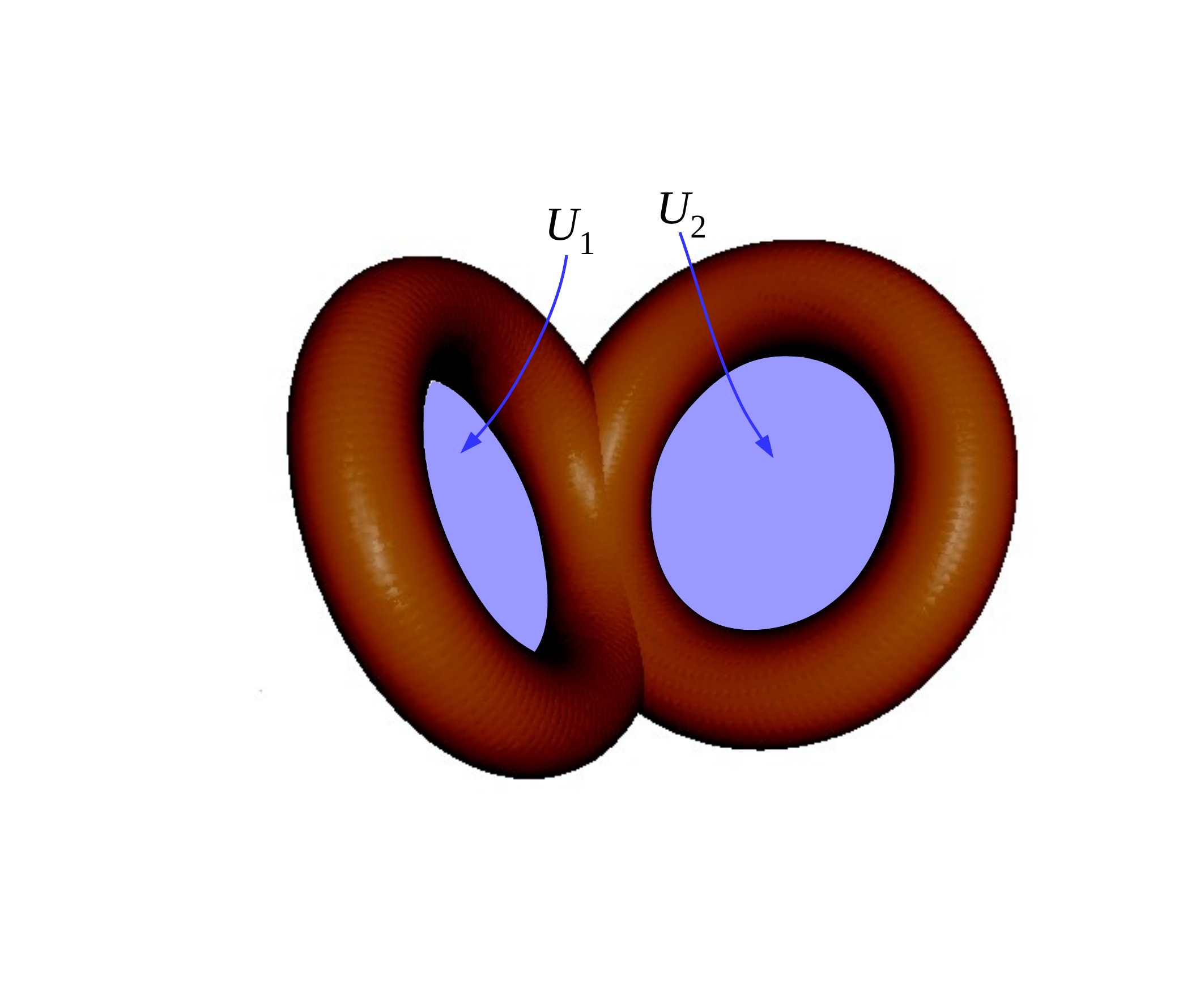}\hspace{0.2in}
    } \hspace{0.01in}
  \subfigure[When the obstacle is a trefoil knot, it's not possible to find $U_i$'s satisfying the conditions of Proposition~\ref{prop:van-kampen-simple}.]{  \label{fig:knot-empty}
   \hspace{0.25in}\includegraphics[width=0.23\textwidth, trim=0 0 0 0, clip=true]{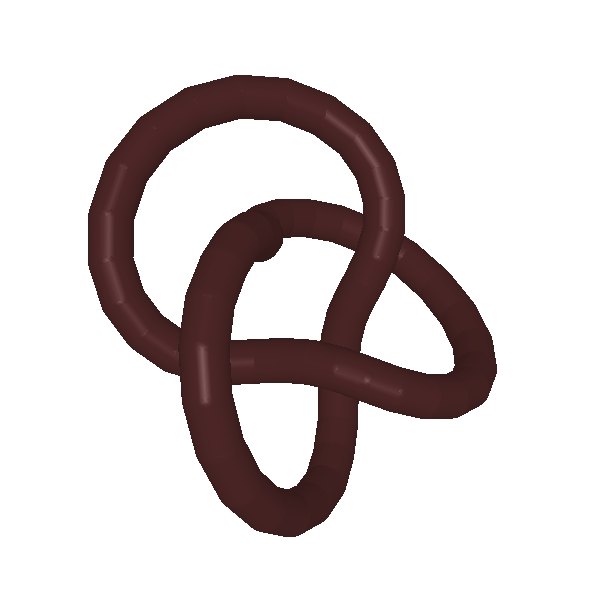} \hspace{0.25in}
    }
  \end{center}
\caption{The fundamental groups of configuration spaces, $\mathbb{R}^3-\mathcal{O}$, may or may not be free.}
\end{figure}


\mysubsection{Application to Graph Search-based Path Planning} \label{sec:graph-search}

Using the homotopy invariants described in the previous sub-section, we describe a graph construction for use in search-based path planning for computing optimal (in the graph) trajectories in different homotopy classes.
\revA{Suppose $G = (V, E)$ is a discrete graph representation of a configuration space, $X$. That is, $V$, is a set of points (vertices) sampled from $X$, and for any two \emph{neighboring} points $x_1,x_2 \in V$ (where neighbors are typically determined by a distance threshold), there exists an edge $(x_1,x_2)\in E$.
\footnote{\revA{We assume that the discretization is such that corresponding to every homotopy class in the configuration space there exists at least one path in the graph that successfully captures/represents that homotopy class. This is true if the sampling of the vertices is sufficiently dense in $X$, and the edge connecting neighboring vertices is the shortest path (with respect to the metric of the underlying space) connecting those points in $X$.}}
We assume $x_s\in V$.}

We first fix the set of sub-manifolds $\{U_1, U_2, \cdots, U_n\}$ as described earlier.
Now, given a discrete graph representation of the configuration space, $G = (V, E)$,
we construct an $h$-augmented graph, $G_h = (V_h, E_h)$, which is essentially a \emph{lift} of $G$ into the \emph{universal covering space} of $X$~\citep{Hatcher:AlgTop}. The construction of such augmented graphs has been described in our prior work~\citep{cable:separation:IJRR:14,ICRA:14:tethered,DARS:14:HRI}, and the explicit construction of $G_h$ can be described as follows:
\begin{enumerate} [leftmargin=15pt,labelindent=0pt,itemindent=0pt,labelsep=2pt,label=\roman*.]
 \item Vertices in $V_h$ are tuples of the form $(x, w)$, where $x\in V$ and $w$ is a word made out of letters $u_i$ and $u_i^{-1}$. \label{item:Gh-vertex-desc}
 \item $(x_s, \ldq~\rdq) \in V_h$. \label{item:Gh-first-vertex}
 \item For every edge $[x_1,x_2]\in E$ and every vertex $(x_1,w)\in V_h$, there exists an edge \revB{$[(x_1,w), (x_2,w\circ h(\overrightarrow{x_1 x_2}))] \in E_h$}, where $\overrightarrow{x_1 x_2}$ denotes the directed curve that constitutes the edge $[x_1,x_2]$. \label{item:incr-const}
 \item The length/cost of an edge in $G_h$ is same as its projection in $G$: $C_{G_h}\left([(x_1,w_1), (x_2,w_2)]\right) = C_G([x_1,x_2])$.
\end{enumerate}

The item `\ref{item:Gh-vertex-desc}' is just a qualitative description of the vertices in $G_h$. Item `\ref{item:Gh-first-vertex}' describes one particular vertex in $G_h$, and using that, item `\ref{item:incr-const}' describes an incremental construction of the entire graph $G_h$.
The topology of $G_h$ can be described as a lift of $G$ into the universal covering space, $\widetilde{X}$, of $X$, and is illustrated in Figure~\ref{fig:cover} for a uniform cylindrically discretized space with a single disk-shaped obstacle.

\begin{figure}
   \begin{center}
  \subfigure[The configuration space, $X$ (light gray), and the vertex set, $V$ (blue dots).]{  \label{fig:cover-G}
   \hspace{0.35in}\includegraphics[height=0.25\textwidth, trim=0 20 0 0, clip=true]{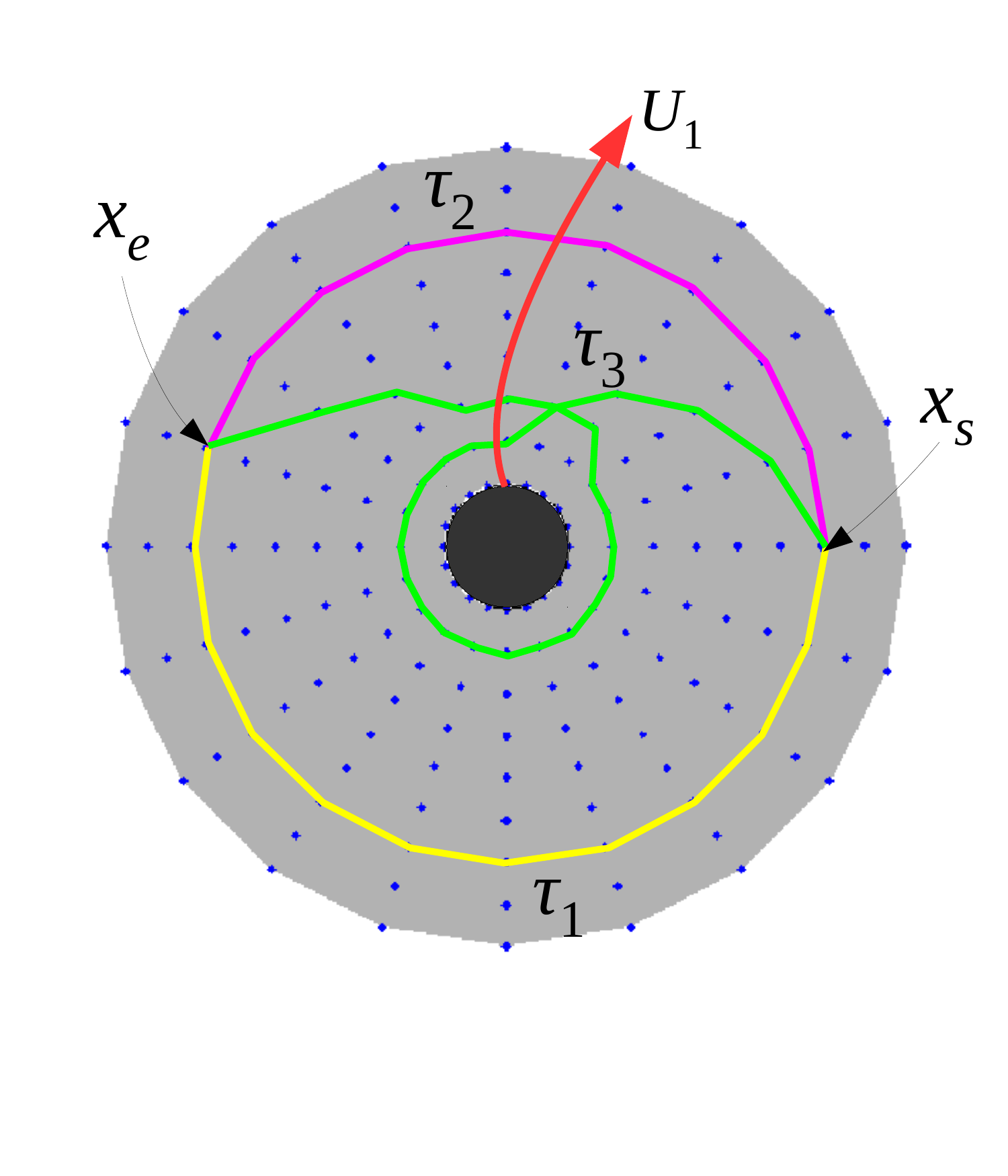}\hspace{0.35in}
    } \hspace{0.05in}
    \subfigure[The universal covering space, $\widetilde{X}$, and the vertex set, $V_h$. Note how the trajectories lift to have different end points.]{  \label{fig:cover-Gh}
    \hspace{0.55in}\includegraphics[height=0.25\textwidth, trim=0 0 0 0, clip=true]{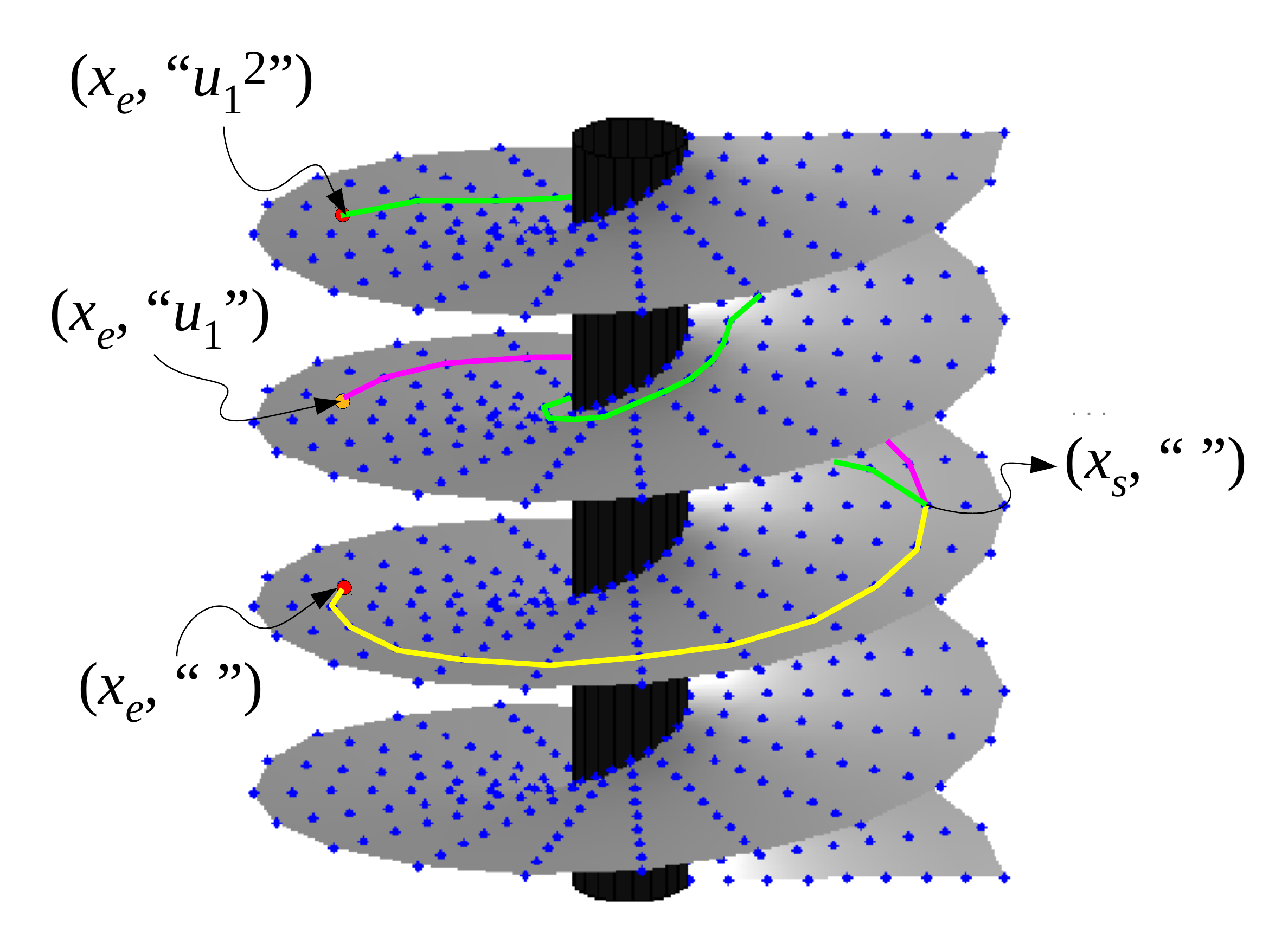}\hspace{0.55in}
    }
  \end{center}
 \caption{The $h$-augmented graph, $G_h$, is a lift of $G$ into $\widetilde{X}$.}
\label{fig:cover}
\end{figure}

Such an incremental construction is well-suited for use in graph search algorithms such as Dijkstra's or A*~\citep{cormen2001}, in which one initiates an \emph{open set} using the start vertex (in item \ref{item:Gh-first-vertex}), and then gradually \emph{expands} vertices, generating only the neighbors at every expansion (the recipe for which is given by item `\ref{item:incr-const}').
Executing a search (Dijkstra's/A*) in $G_h$ from $(x_s, \ldq ~\rdq)$ to vertices of the form $(x_e, *)$ (where `$*$' denotes any word), and projecting it back to $G$, gives us optimal trajectories in $G$ that belong to different homotopy classes.
Figure~\ref{fig:result-free-2d} shows $5$ such optimal trajectories in the graph, connecting a given start and goal vertex, where $G$ was constructed by an uniform hexagonal discretization of the planar configuration space. 
One can then employ a simple curve shortening algorithm~\citep{ICRA:14:tethered} to obtain 
ones more optimal than the
ones restricted to $G$ (Figure~\ref{fig:result-free-2d-shortened}).
Similarly, shortest trajectories connecting $x_s$ and $x_e$ can be obtained in $3$-dimensional configuration spaces with free fundamental group (\emph{e.g.}, Figure~\ref{fig:result-free-3d}, \revA{showing $4$ paths connecting two fixed points in $\mathbb{R}^3$ with two un-linked toroidal obstacles}).

\begin{figure}
   \begin{center}
  \hspace{-0.2in}
  \subfigure[$5$ shortest trajectories in $G$ belonging to different homotopy classes. (obstacles in gray)]{  \label{fig:result-free-2d}
   \hspace{0.0in}\fbox{\includegraphics[height=0.21\textwidth, trim=0 20 0 0, clip=true]{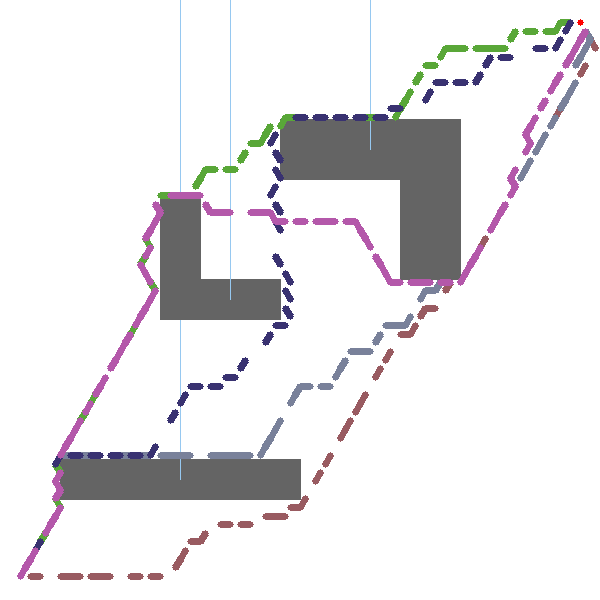}}\hspace{0.0in}
    } 
  \subfigure[The trajectories after being \emph{shortened}, but belonging to same homotopy classes.
  ]{  \label{fig:result-free-2d-shortened}
   \hspace{0.0in}\fbox{\includegraphics[height=0.21\textwidth, trim=0 20 0 0, clip=true]{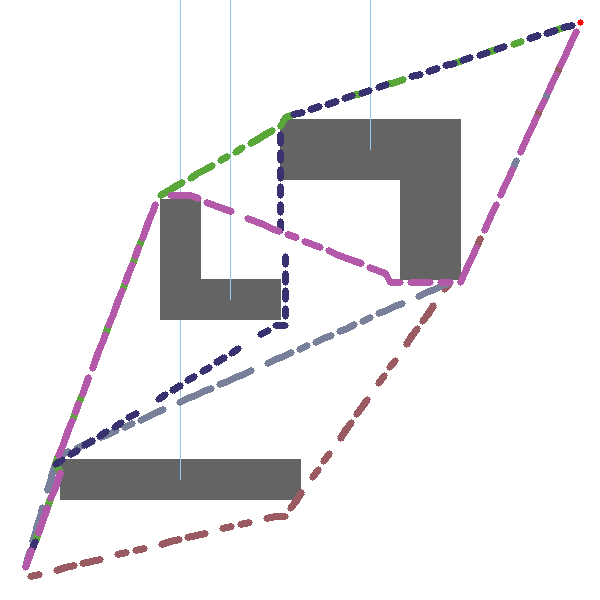}}\hspace{0.0in}
    } 
  \subfigure[By setting $x_s=x_e=x_0$, the same method can be used to find shortest loops passing through $x_0$.]{  \label{fig:result-free-2d-loops}
   \hspace{0.0in}\fbox{\includegraphics[height=0.21\textwidth, trim=0 20 0 0, clip=true]{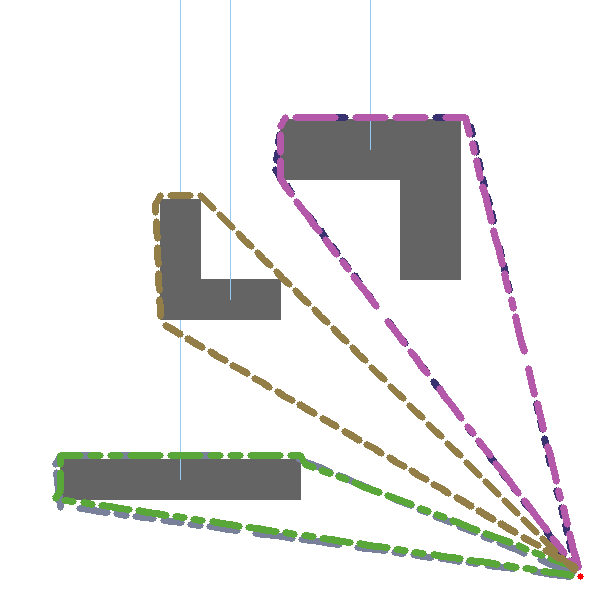}}\hspace{0.0in}
    } 
  \subfigure[Similar computation in $\mathbb{R}^3-\mathcal{O}$, when its fundamental group is free.]{  \label{fig:result-free-3d}
   \hspace{0.0in}\fbox{\includegraphics[height=0.21\textwidth, trim=0 20 0 0, clip=true]{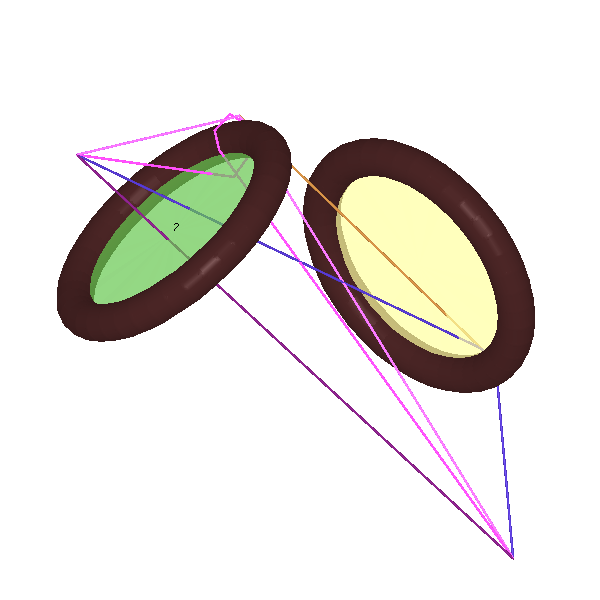}}\hspace{0.0in}
    }
    \hspace{-0.3in}
  \end{center}
\caption{Simple results in configuration spaces that have free fundamental groups. The dot/dash pattern and colors are shown to distinguishing between the trajectories.} \label{fig:result-free}
\end{figure}

%
%


\mysection{Knot and Link Complements}


As described earlier, when the obstacle set in $\mathbb{R}^3$ consists of knots and links, it is in general not possible to find the sub-manifolds $U_i \subset (\mathbb{R}^3 - \mathcal{O})$ as required by Proposition~\ref{prop:van-kampen-simple}.
However, thankfully we have more generalized versions of the Van Kampen theorem at our disposal that lets us extend the proposed methodology to such spaces. We first illustrate the generalization in $\mathbb{R}^3 - \mathcal{O}$ using knot/link diagrams.

\mysubsection{Dehn Presentation of Fundamental Group of Knot/Link Complements}

For simplicity we consider
knots and links 
in $\mathbb{R}^3$ as obstacles.
We assume that the knots/links are described by polygons, all of which together constitute $O\subset \mathbb{R}^3$.
The \emph{thickened} obstacles (the knots/links with the tubular neighborhoods) will be referred to as $\mathcal{O}$.
We consider a knot/link diagram~\citep{lickorish1997introduction} of the obstacles:
Given a projection map, $p: \mathbb{R}^3 \rightarrow \mathbb{R}^2$, the knot/link diagram is the projection of the knot/link, $p(O)$, along with additional information about the \emph{$z$-ordering} at the self-intersections of $p(O)$.
We assume that in this diagram the self-intersections are all transverse (which can always be achieved through infinitesimal perturbations) and that the diagram divides the plane into simply-connected regions (say $q$ counts of them) each bounded by segments of the projected obstacles, and one unbounded exterior region.
%
The \emph{boundary} (the boundary of the closure) of each of the bounded regions is itself a polygon, $Q_i \subseteq p(O), ~i=1,2,\cdots,q$ 
(Figure~\ref{fig:knot-diagram}). 
Clearly $p^{-1}(Q_i) \cap O$ (the preimage of $Q_i$ in the original obstacle) will be a discontinuous polygon, with discontinuities at the preimages of the self-intersection points on the knot diagram. But these discontinuities can be removed simply by ``connecting'' the preimages at each self-intersection point, resulting into a spatial polygon, $\widetilde{Q}_i$ with the property that $p(\widetilde{Q}_i) = Q_i$.
A simple triangulation can then be employed to construct a surface, $U_i$, in $\mathbb{R}^3-O$, such that its boundary is $\widetilde{Q}_i$ and $p(U_i)$ is the simply-connected region bounded by $Q_i$ (this can be achieved by first triangulating the planar region, $p(U_i)$, and then \emph{lifting} the triangulation to $\mathbb{R}^3$) -- see Figure~\ref{fig:knot-diagram}.
%

The $U_i$'s thus constructed satisfy properties (\ref{item:prop-simply-conn}) and (\ref{item:prop-z}) of Proposition~\ref{prop:van-kampen-simple}, but not property (\ref{item:prop-empty-intersection}), nor do they satisfy the property $\partial U_i \subseteq \partial X$. 
The consequence of this is that near the regions where the $U_i$'s intersect, there can be closed loops in $\mathbb{R}^3 - O$ which are null-homotopic, but words constructed simply by tracing the loop and inserting letters corresponding to intersections with the $U_i$'s, as we did earlier, may not be the empty word (identity element). An example is illustrated in Figure~\ref{fig:knot-surfs}).
Due to our construction, such intersection of the $U_i$'s happen only along lines passing through the pre-image of the self-intersections in the knot diagram, for each of which we end up getting a null-homotopic closed loop with non-empty word.


\begin{figure}
   \begin{center}
  \subfigure[Knot diagram, showing one of the polygons, $Q_3$ (in cyan).]{ \label{fig:knot-diagram}
   \hspace{0.22in}\includegraphics[height=0.22\textwidth, trim=50 50 50 50, clip=true]{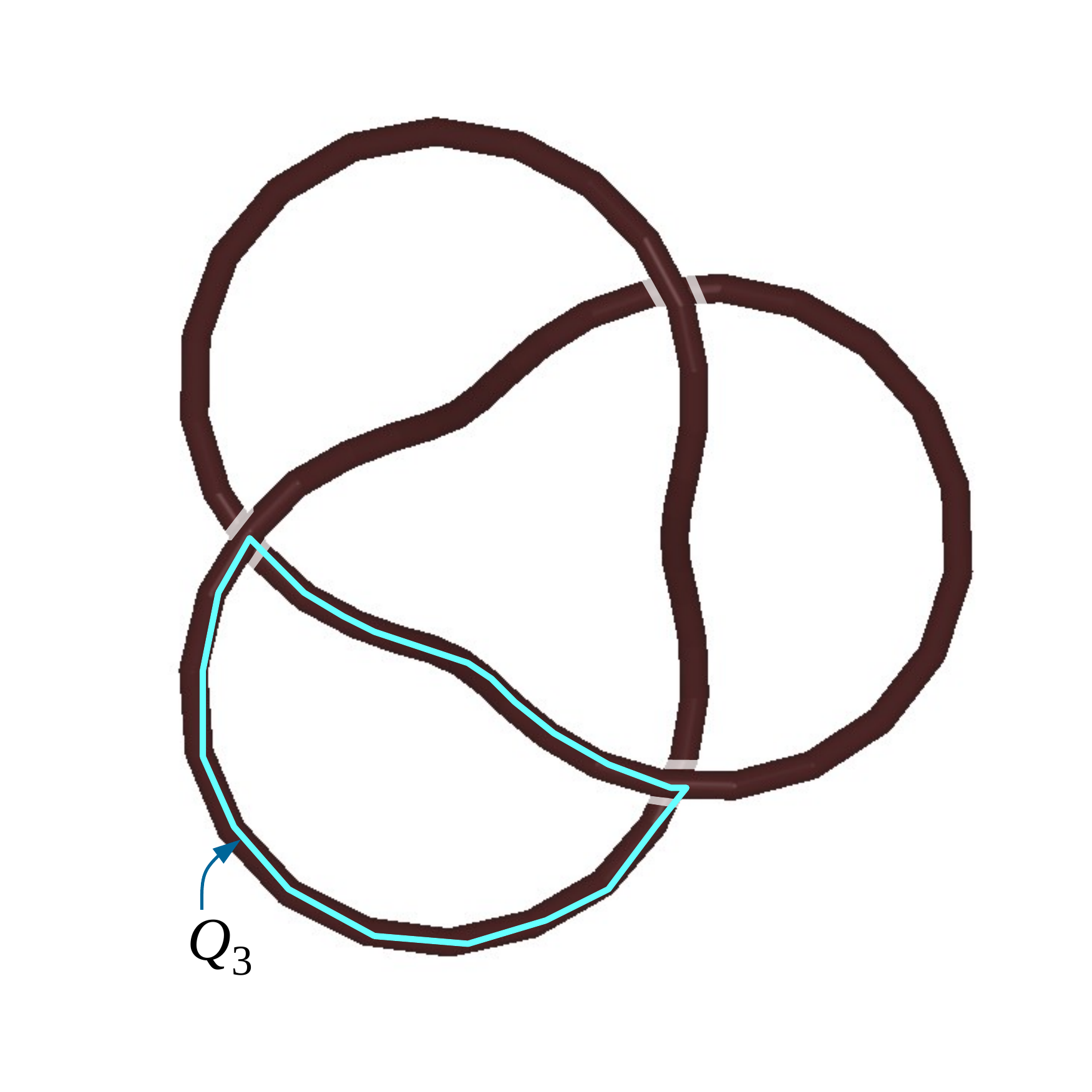}\hspace{0.22in}
    } \hspace{0.03in}
  \subfigure[The surfaces, $U_i$, shown in different colors. The closed loop $\gamma$ is null-homotopuc, but $h(\gamma)=\ldq u_1^{-1} u_2~ u_3^{-1} \rdq$. Top and side views.]{ \label{fig:knot-surfs}
   \hspace{0.2in}\includegraphics[height=0.22\textwidth, trim=50 50 50 50, clip=true]{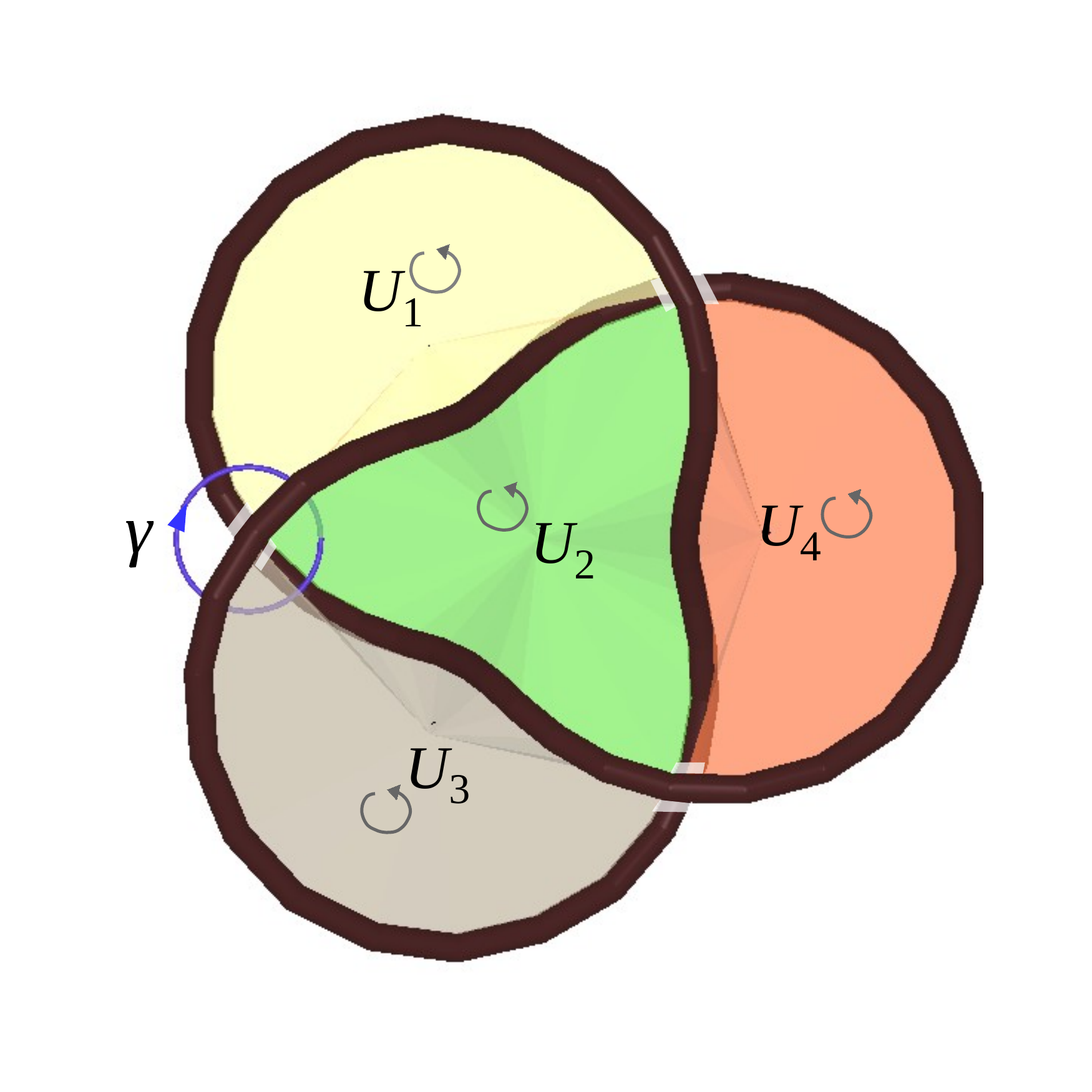}\hspace{0.2in}
   \hspace{0.2in}\includegraphics[height=0.22\textwidth, trim=0 10 0 50, clip=true]{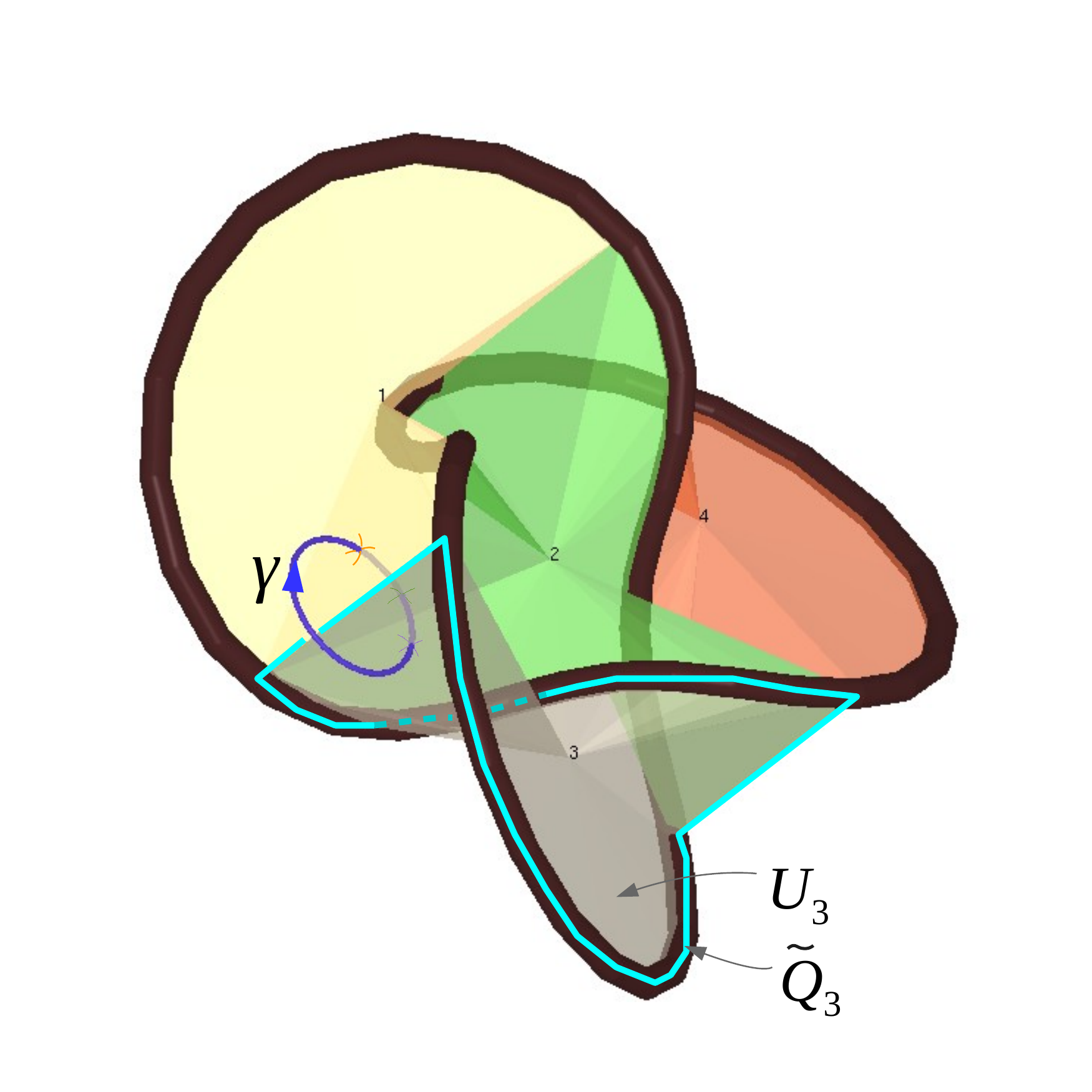}\hspace{0.2in}
    } \hspace{0.02in}
  \end{center}
\caption{Constructing the surfaces, $U_i$, from polygonal knot/link diagrams (polygon segments shown as thickened cylinders for easy visualization). Null-homotopic loops as $\gamma$ have non-empty words.} \label{fig:knot-surf}
\end{figure}

The Dehn presentation~\citep{Weinbaum:knot:word:71} uses surfaces as constructed to describe the fundamental group of knot/link complements. 
We consider the free group, 
$G = <\! u_1, u_2, \cdots, u_q \!>$ $= <\! \mymathsf{U} \!>$.
In general, for every self-intersection in the knot/link diagram, there are four adjacent surfaces, $U_{i_1}, U_{i_2}, U_{i_3}$ and $U_{i_4}$ in the order as shown in Figure~\ref{fig:intersection} (when the self-intersection is adjacent to the unbounded region in the knot diagram, there are only three). Correspondingly, there is a closed null-homotopic loop, $\gamma_i$, that has a word $\rho_i = \ldq u_{i_1}\, u_{i_2}^{-1} u_{i_3}\, u_{i_4}^{-1} \rdq$. Thus we have such words $\rho_1, \rho_2, \cdots, \rho_m$ (assuming there are $m$ counts of self-intersections) that represent null-homotopic loops. These words are called \emph{relations} and we call the set $\mymathsf{R} = \{\rho_1, \rho_2, \cdots, \rho_m\}$ the \emph{relation set}.
It can be easily noted that inverses 
and cyclic permutations
of each $\rho_i$ also corresponds to null-homotopic loops. 
We thus define 
the \emph{symmetricized relation set}, $\overline{\mymathsf{R}}$, as the set containing all the words in $\mymathsf{R}$, all their inverses, and all cyclic permutation of each of those.

Let the normal subgroup of $G$ generated by $\overline{\mymathsf{R}}$ be $N = \{\ldq \alpha_1 \rho_{i_1} \alpha_1^{-1} ~ \alpha_2 \rho_{i_2} \alpha_2^{-1} \cdots \newline \alpha_\kappa^{-1} ~ \alpha_\kappa \rho_{i_\kappa} \cdots \rdq ~|~ \alpha_k \!\in\! G,~ \rho_{i_k} \!\in\! \overline{\mymathsf{R}} \} = <\! \overline{\mymathsf{R}}^G \!>$ (normal closure 
of $\overline{\mymathsf{R}}$ in $G$).
It is easy to observe that a closed loop, $\gamma$, in $X = \mathbb{R}^3-\mathcal{O}$, has a word that is an element of $N$ iff it is null-homotopic.
Due to a more general version of the Van Kampen's theorem~\citep{Hatcher:AlgTop}, the fundamental group of $X$ is the quotient group, $\pi_1(X) = G/N = <\! \mymathsf{U} ~|~ \mymathsf{R} \!>$ --- the group in which, under the quotient map, elements of $N$ are mapped to the identity element.
\changedJ{This is summarized and generalized in the following proposition.}

\begin{figure}
   \begin{center}
   \includegraphics[height=0.25\textwidth, trim=0 0 0 0, clip=true]{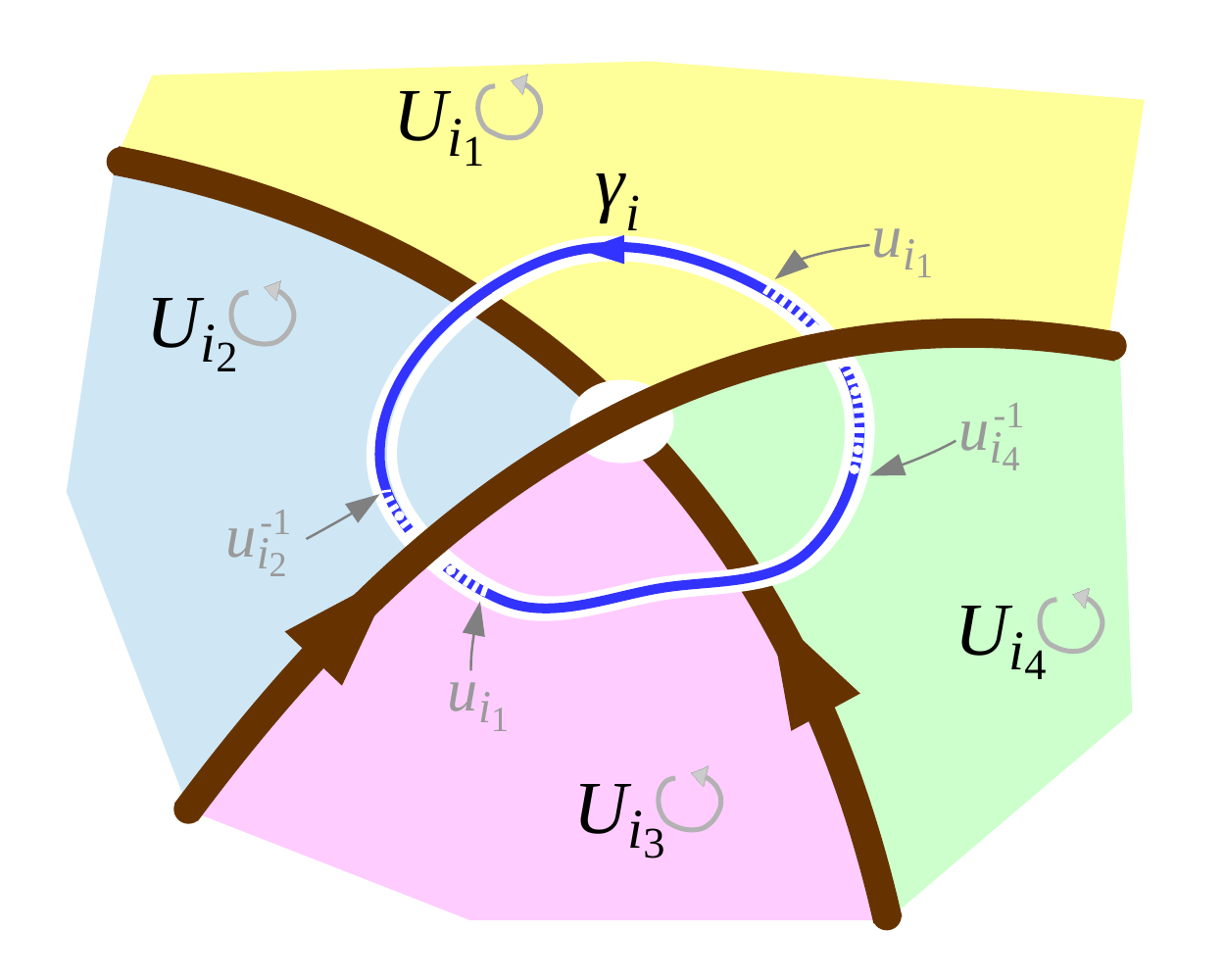}
  \end{center}
 \caption{A ``self-intersection'' in a knot/link diagram, with a null-homotopic loop, $\gamma_i$, intersecting the surfaces adjacent to the intersection. $h(\gamma_i) = \ldq u_{i_1}\, u_{i_2}^{-1} u_{i_3}\, u_{i_4}^{-1} \rdq$. One or more of the surfaces can be non-existent, in which case the corresponding letters are simply absent from the word. These words constitute $\mymathsf{R}$, and should map to the identity element in $\pi_1(X) = <\! \mymathsf{U} ~|~ \mymathsf{R} \!>$} \label{fig:intersection} 
\end{figure}

\changedJ{
\begin{proposition} 
\label{prop:van-kampen-general}
Given a $D$-dimensional manifold (possibly with boundary), $X$, suppose $U_1, U_2, \cdots U_n$ are $(D-1)$-dimensional orientable sub-manifolds (not necessarily smooth and possibly with boundaries) such that $\partial U_i \subseteq \partial X \cup \bigcup_{j\neq i} U_j $ and
 \begin{enumerate}[leftmargin=20pt,labelindent=0pt,itemindent=0pt,labelsep=2pt]
  \item[1.]a) The intersections of $U_i$ and $U_j$, if non-empty, are transverse (hence $(D-2)$-dimensional), \label{item:prop2-nonempty-intersection}
  \item[~~]b) $\bigcap_{i=1}^n U_i = \emptyset$, \label{item:prop2-empty-joint-intersection}
  \item[2.] $X- \bigcup_{i=1}^n U_i$ is \changedJ{path connected and} simply-connected, and, \label{item:prop2-simply-conn}
  \item[3.] $\pi_1(X- \bigcup_{i=1, i\neq j}^n U_i) \simeq \mathbb{Z}, ~\forall j=1,2,\cdots,n$, \label{item:prop2-z}
 \end{enumerate}
 For a loop in $X$, as before, one can construct a word as described in Construction~\ref{const:free}. Let the set of all the letters constituting such words be $\mymathsf{U}$.
 
 For every $(D-2)$-dimensional intersection of the $U_i \cap U_j, ~i\neq j$, one can construct a loop in the tubular neighborhood of $U_i \cap U_j$ in $X$ that links with $U_i \cap U_j$. Let the set of words corresponding to such loops be $\mymathsf{R}$.
 
 Then the fundamental group of $X$ is isomorphic to $<\! \mymathsf{U} ~|~ \mymathsf{R} \!>$.
\end{proposition}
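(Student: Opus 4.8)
The plan is to prove the presentation by first \emph{deleting} the codimension-two intersection strata to reduce to the free case of Proposition~\ref{prop:van-kampen-simple}, and then \emph{re-filling} those strata while tracking the relation each one imposes. Write $S = \bigcup_{i<j}(U_i\cap U_j)$ for the union of all the $(D-2)$-dimensional intersection loci, and set $X' = X - S$. Inside $X'$ the truncated walls $U_i' = U_i \cap X'$ become pairwise disjoint, so the first task is to check that the pair $(X', \{U_i'\})$ satisfies the three hypotheses of Proposition~\ref{prop:van-kampen-simple}. This is bookkeeping once the definitions are unwound: one verifies the set equalities $X' - \bigcup_i U_i' = X - \bigcup_i U_i$ and $X' - \bigcup_{i\neq j}U_i' = X - \bigcup_{i\neq j}U_i$ (every stratum $U_k\cap U_\ell$ is already contained in whichever wall is being deleted), so that hypotheses (2) and (3) of the present proposition become exactly hypotheses (2) and (3) of Proposition~\ref{prop:van-kampen-simple}; disjointness supplies hypothesis (1); and the boundary condition $\partial U_i'\subseteq\partial X'$ holds because the edges of $U_i$ that formerly lay on $\bigcup_{j\neq i}U_j$ now lie on the deleted locus, i.e. on the frontier $\partial X'$. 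Proposition~\ref{prop:van-kampen-simple} then gives $\pi_1(X') \cong \langle \mathsf{U}\rangle$, the free group on the meridians $\{u_i\}$, with the word of Construction~\ref{const:free} realizing the isomorphism.

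The second task is to compare $\pi_1(X')$ with $\pi_1(X)$ along the inclusion $X' \hookrightarrow X$, which merely adds the codimension-two set $S$ back in. The key input is the standard excision fact that for a codimension-two (stratified) submanifold $S$ of a manifold $X$, the inclusion $X-S\hookrightarrow X$ induces a surjection on $\pi_1$ whose kernel is the normal closure of the meridians of the components of $S$. Surjectivity is general position: a loop has dimension one and $S$ codimension two, so every loop can be perturbed off $S$ into $X-S$. For the kernel, given a loop of $X'$ that bounds a disk $f\colon D^2\to X$, one makes $f$ transverse to $S$; then $f^{-1}(S)$ is a finite set of interior points, and excising small disks about them exhibits the loop as a product of conjugates of those boundary circles, each a meridian of $S$. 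Conversely each meridian bounds a small transverse disk in $X$ and so dies, so the kernel is exactly the normal closure of the meridians.

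It then remains to identify the meridian attached to each stratum with the stated relator. Near a component $\Sigma$ of $S$ the transversality hypothesis (1a) fixes the local model — the sheets of $\bigcup_k U_k$ meeting $\Sigma$ cut a normal $2$-disk into sectors — so a small loop linking $\Sigma$ reads off precisely the cyclic word obtained by crossing those sheets in order (the four-letter word $u_{i_1}u_{i_2}^{-1}u_{i_3}u_{i_4}^{-1}$ in the knot-diagram case, with fewer letters when a sheet is absent). Hypothesis (1b), together with the fact that a generic two-dimensional disk meets strata of codimension three or more in the empty set, guarantees that only these sheet-crossing strata are encountered, so no further relators arise. Collecting the linking words over all components of $S$ yields exactly $\mathsf{R}$, whence $\pi_1(X) \cong \pi_1(X')/\langle\langle \mathsf{R}\rangle\rangle = \langle \mathsf{U}\mid\mathsf{R}\rangle$. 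The passage from based loops to arbitrary curves joining $x_s$ to $x_e$ is then handled exactly as in Proposition~\ref{prop:van-kampen-simple}, since $\gamma_1$ and $\gamma_2$ are homotopic rel endpoints iff $\gamma_1\cup-\gamma_2$ is null-homotopic.

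The main obstacle I anticipate is the codimension-two excision step: making the transversality and finger-move arguments rigorous when $S$ is in general only a \emph{stratified} set (several sheets may meet along one stratum, as at a knot crossing) rather than a smooth submanifold, and confirming that the resulting expression of a null-homotopy as a product of meridian conjugates lands the relators inside the normal closure of $\mathsf{R}$ and not some larger set. The first and third steps are essentially bookkeeping once the hypotheses are unwound, but this excision step is where the genuine content of the generalized Van Kampen theorem resides.
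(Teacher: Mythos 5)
Your argument is correct in outline, but it takes a genuinely different route from the paper's, because the paper in fact offers no self-contained proof of Proposition~\ref{prop:van-kampen-general}: the proposition is stated as a summary and generalization of the preceding Dehn-presentation discussion, and the key identity $\pi_1(X)=G/N$ is justified there only by appeal to ``a more general version of the Van Kampen's theorem'' \citep{Hatcher:AlgTop} and to the classical literature on Dehn presentations \citep{Weinbaum:knot:word:71}. Your two-step argument --- (i) delete $S=\bigcup_{i<j}(U_i\cap U_j)$, check that the truncated walls satisfy the hypotheses of Proposition~\ref{prop:van-kampen-simple} (your set identities do hold, since $S$ is contained in each union being removed), so that $\pi_1(X-S)$ is free on $\mymathsf{U}$; then (ii) refill $S$ and identify the kernel of $\pi_1(X-S)\to\pi_1(X)$ with the normal closure of the meridian words, which are exactly $\mymathsf{R}$ --- supplies precisely the argument that the paper's citation papers over. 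It also exposes a real subtlety the citation hides: the open cover $\{X_0,X_1,\dots,X_n\}$ used to prove Proposition~\ref{prop:van-kampen-simple} is not even a cover of $X$ in the present setting, since points of $S$ lie in no $X_j$; so the simple van Kampen argument cannot be rerun verbatim and something like your excision step is genuinely needed. What you give up relative to the paper is rigor exactly where you flag it: the transversality and general-position arguments for loops and disks relative to a stratified, possibly non-smooth codimension-two set $S$ (note in addition that hypotheses (1a)--(1b) by themselves do not control triple intersections $U_i\cap U_j\cap U_k$ for $n\geq 4$, and that in the knot application several sheets share a single stratum, so ``meridian'' must be read as a loop linking the whole stratum, as the proposition's definition of $\mymathsf{R}$ indeed prescribes). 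Since the paper silently assumes the same tameness and transversality, your sketch is at least at the paper's level of rigor while being considerably more explicit about where each relator comes from and why no others arise.
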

}

\mysubsection{The Word Problem and Dehn Algorithm}

Due to the discussion above, 
two trajectories, $\gamma_1, \gamma_2$, connecting $x_s$ and $x_e$ in the knot/link complement, $X$, belong to the same homotopy class iff
the word $h(\gamma_1 \cup -\gamma_2) = h(\gamma_1) \circ h(\gamma_2)^{-1}$ belongs to
$N  = <\! \overline{\mymathsf{R}}^G \!>$. 
This problem in group theory is known as the \emph{word problem}~\citep{epstein1992word}, and there are various algorithms, each suitable for specific types of groups, for solving the word problem. We, in particular, will focus on a very simple algorithm due to Max Dehn~\citep{lyndon2001combinatorial,greendlinger:dehn:1986}, which is applicable to a wide class of groups and their presentations.

\emph{Dehn's metric algorithm:}
Given a presentation of a group, $\pi_1 = <\! \mymathsf{U} ~|~ \mymathsf{R} \!>$, we construct the \emph{symmetricized} relation set $\overline{\mymathsf{R}}$ 
as described earlier.
%
Given a cyclically reduced word, $w$, made up of letters (and their inverses) from $\mymathsf{U}$, one checks for every element $\rho \in \overline{\mymathsf{R}}$ if $w$ and $\rho$ share a common sub-words that is of length greater than $|\rho|/2$ ($|\rho|$ being the length of $\rho$). If they do (say, $\rho = \alpha\beta\gamma$, with $\beta$ being a sub-word appearing in $w$, and $|\beta|>|\rho|/2$), we replace the sub-word with the shorter equivalent that one obtains by setting $\rho$ to the identity element (\emph{i.e.}, replace $\beta$ by $\alpha^{-1}\gamma^{-1}$ in $w$).
This process is repeated, and the algorithm terminates when no more such sub-words are found.
The final word at which the algorithm terminates indicates if the initial word, $w$, is in $N$ (whether it maps to the identity element in $\pi_1$).

\changed{This algorithm can be used in conjunction with search in $G_h$ as before for finding optimal trajectories in different homotopy classes, with two vertices $(x,w_1), (x,w_2) \in V_h$ being the same iff $h(w_1)\circ h(-w_2)$ reduces to the empty word upon applying the Dehn's metric algorithm.}

\mysubsection{Guarantees of Dehn Algorithm}

It's well known \citep{greendlinger:dehn:1986} that if Dehn algorithm terminates at the empty word, then $w\in N$.
However, the converse is not necessarily true. One can derive several sufficient (and often highly restrictive) conditions on the presentation $<\! \mymathsf{U} ~|~ \mymathsf{R} \!>$ under which the converse holds \citep{lyndon2001combinatorial}.
If, for a given presentation of a group
the converse holds,
we say that Dehn algorithm is \emph{complete} for that presentation (or that the presentation is complete with respect to the Dehn algorithm, or that the word problem is solvable using the specific presentation and Dehn algorithm).

Due to the result of \citep{Weinbaum:knot:word:71}, the Dehn presentation of the fundamental groups of the complement of a tame, alternating, prime knot is complete with respect to the Dehn algorithm. 
%
%
\changed{It is also known~\citep{epstein1992word} that automatic groups (including hyperbolic groups) have presentations that are complete with respect to Dehn algorithm.}



\mysection{Cylindrically-deleted Configuration Spaces}

The previous results are limited to 3-dimensional spaces: one suspects that higher dimensions are more difficult.  However, there are some classes of spaces for which optimal path-planning with homotopy constraints is still computable via a Dehn algorithm, independent of dimension.  The following class of examples is inspired by robot coordination problems, in which individual agents with predetermined motion paths have to coordinate their motions so as to avoid collision. 

Consider a collection of $n$ graphs $(\Gamma_i)_1^n$, each embedded in a common workspace (usually $\mathbb{R}^2$ or $\mathbb{R}^3$) with intersections permitted. In the simplest case, each $\Gamma_i$ will be homeomorphic to a closed interval, but more general graphs are permitted, such as roadmap approximations to a configuration space. On each $\Gamma_i$, a robot $R_i$ with some particular fixed size/shape is free to move. Such motion may be Euclidean (by translation/rotation); more general motions are possible, so long as the region occupied by the robot $R_i$ in the workspace is purely a function of location on $\Gamma_i$. A point in the product space $\prod_i\Gamma_i$ determines the locations of the $n$ robots in the common workspace.
Certain configurations are illegal, due to collisions. For example, if the robots are point-like, and each $\Gamma_i=\Gamma$ is identical, then the configuration space of $n$ points on $\Gamma$ is the cross product $\prod_i\Gamma_i$ minus the pairwise diagonal $\Delta$.  If the robots are given finite extent, then this system has a configuration space obtained by the graph product $\prod_i\Gamma_i$ minus an $\epsilon$-neighborhood of the pairwise diagonal. However, more general types of collisions can be defined, say, if the robots are irregularly shaped and the graphs $\Gamma_i$ are all different. In this most general case, the natural analogue of a configuration space is the coordination spaces of \citep{GL:2006}.

The {\em coordination space} of this system is defined to be the space of all configurations in $\prod_i\Gamma_i$ for which there are no collisions -- the geometric robots $R_i$ have no overlaps in the workspace. Under the assumption that collisions between robots are pairwise-defined, the coordination space is {\em cylindrically deleted} and of the form 

{\small \begin{equation*}
X = 
\left(\prod_{i=1}^n \Gamma_i\right) - {\mathcal O} 
\quad {\rm where } \quad
{\mathcal O} = \bigcup_{i<j}\left\{
  (x_k)_1^N\in\prod_{k=1}^{N} \Gamma_k 
  \, : \, (x_i,x_j)\in\Delta_{i,j}
  \right\},
\end{equation*}}

\noindent for some (open, ``collision'') sets $\Delta_{i,j}\subset\Gamma_i\times\Gamma_j$ where $1\leq i<j\leq N$. In what follows, we assume that the $\Delta_{i,j}$ are sufficiently tame (e.g., semialgebraic) so as to avoid issues of non-finitely-generated $\pi_1$. Given (internal, path-) metrics on each $\Gamma_i$, the coordination space $X$ inherits a locally-Euclidean metric on products of edges in the graphs.  Such $X$ are complete path-spaces and thus the problem of geodesics is well-posed. Their fundamental groups can be (highly) nontrivial, depending on the obstacle set ${\mathcal O}$. However, finding optimal paths subject to homotopy classes is still computable.
\changed{To that end one can construct the subspaces $U_i \subset X$ of co-dimension $1$, and the relation set $\mymathsf{R}$, 
and use them to design complete homotopy invariants as before.
We do not discuss the explicit construction of the $U_i$'s for cylindrically-deleted coordination spaces in this paper, but provide the following theorem on solvability of the word problem in such spaces.}

\begin{theorem}
\label{thm:coordDehn}
Any compact cylindrically-deleted coordination space $X$ admits a Dehn algorithm for $\pi_1$. 
\end{theorem}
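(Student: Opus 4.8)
The plan is to reduce the claim to the framework of automatic groups used in the previous section, via the nonpositive curvature of coordination spaces. The key structural fact I would establish is that a compact cylindrically-deleted coordination space $X$ carries, after a suitable cubical subdivision and up to deformation retraction, the structure of a \emph{nonpositively curved} (NPC) cube complex. The ambient product $\prod_{i=1}^n \Gamma_i$ is already a cube complex: each graph $\Gamma_i$ is a $1$-dimensional cube complex, and the product of cube complexes is a cube complex whose top cells are products of edges. The deletion of the obstacle set $\mathcal{O}$ is where the \emph{cylindrical} hypothesis does the work. Because collisions are pairwise-defined, the removed region is a union of cylinders, each constraining only a single coordinate pair $(x_i,x_j)$; after subdividing the $\Gamma_i$ finely enough that the (tame/semialgebraic) boundaries $\partial\Delta_{i,j}$ are realized by subcomplexes, $X$ becomes a cubical subcomplex of $\prod_i\Gamma_i$. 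This is exactly the setting of \citep{GL:2006}, and the tameness assumption feeds in here, guaranteeing both finitely generated $\pi_1$ and the existence of such a finite subdivision.

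First I would verify Gromov's link condition: the link of every vertex of $X$ must be a flag simplicial complex. This is the crux of the argument, and it is precisely where the pairwise (cylindrical) nature of collisions is essential. A collection of edge-directions spanning a cube of $\prod_i\Gamma_i$ survives in $X$ if and only if the corresponding configuration avoids every pairwise collision; since the constraints are pairwise, a set of directions bounds a cube in $X$ exactly when each of its coordinate pairs does, which is the defining ``flag'' property that every clique in the $1$-skeleton of a link is filled by a simplex. Establishing this flag condition, and hence nonpositive curvature, is the step I expect to be the main obstacle, since it requires choosing the subdivision carefully so that the deleted region is genuinely a subcomplex and the local product structure is preserved at each vertex of $X$.

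Once $X$ is known to be a compact NPC cube complex, the remainder is essentially automatic. By the Cartan--Hadamard theorem for nonpositively curved spaces, the universal cover $\widetilde{X}$ is a CAT(0) cube complex, and $\pi_1(X)$ acts on it properly discontinuously and cocompactly by deck transformations, i.e. geometrically. By the theorem of Niblo and Reeves, any group admitting such an action on a CAT(0) cube complex is biautomatic, and in particular automatic. Finally I would invoke the result cited earlier in this paper \citep{epstein1992word}, namely that automatic groups admit presentations which are complete with respect to the Dehn algorithm. Chaining these implications, $\pi_1(X)$ admits a finite presentation $<\! \mymathsf{U} ~|~ \mymathsf{R} \!>$ on which the Dehn algorithm described above solves the word problem, which is the assertion of the theorem. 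Compactness is used twice here: to secure the cocompact action (hence biautomaticity) and to keep the presentation finite, so that the symmetricized relation set $\overline{\mymathsf{R}}$ is finite and the Dehn algorithm is literally executable.
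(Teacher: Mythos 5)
Your proof follows the same architecture as the paper's: obtain a nonpositively curved cubical model for $X$, invoke the Niblo--Reeves theorem to conclude that $\pi_1(X)$ is biautomatic, and then cite Epstein et al.\ for the fact that (bi)automatic groups admit a Dehn algorithm, with compactness supplying cocompactness and finiteness of the presentation. Those last two steps are exactly the paper's, and they are fine. Your proposal to verify Gromov's link condition directly from the pairwise nature of the collisions is also legitimate in spirit --- it amounts to re-deriving the nonpositive-curvature theorem of Ghrist--LaValle that the paper simply cites.

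The step that fails as written is the claim that ``after subdividing the $\Gamma_i$ finely enough that the boundaries $\partial\Delta_{i,j}$ are realized by subcomplexes, $X$ becomes a cubical subcomplex of $\prod_i\Gamma_i$.'' The collision sets $\Delta_{i,j}$ are arbitrary tame (e.g.\ semialgebraic) open sets, so their boundaries are generically curved, and no \emph{finite} cubical subdivision of a product of graphs realizes a curved hypersurface as a subcomplex --- consider deleting a round disk from a square: however finely you subdivide, the complement of the deleted cells is only an approximation. So exact cubical realization is impossible in general, and your hedge ``up to deformation retraction'' names precisely the assertion that still needs proof; constructing that retraction (and showing the retract is itself NPC) is where the tameness hypothesis must do real work. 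The paper sidesteps this with a different device: $X$ is realized as a Hausdorff \emph{limit} of nonpositively curved cubical complexes whose fundamental groups stabilize by tameness (Theorem 4.4 of the cited Ghrist--LaValle paper), and biautomaticity is inherited by the stable $\pi_1$. To close your gap you must either adopt that limiting/stabilization argument, or give an actual proof (via cellulation theorems for tame sets) that $X$ deformation retracts onto a finite cubical subcomplex to which your flag-condition verification then applies; as it stands, the bridge from $X$ to an honest NPC cube complex is missing.
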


\begin{proof}
Any such $X$ is realized as a Hausdorff limit of cubcial complexes which were shown in \cite[Thm 4.4]{GL:2006} to be nonpositively-curved and to stabilize in $\pi_1$ by tameness. All nonpositively-curved piecewise-Euclidean cube complexes have fundamental groups which are, by a famous result of Niblo-Reeves \citep{NRgeometry98}, {\em biautomatic}. Biautomatic groups all admit a Dehn algorithm (specifically, there is a quadratic isoperimetric inequality) \citep{epstein1992word}. 
\end{proof}

It is worth noting that $\ell_2$-shortest paths 
are perhaps not the most natural optimization for coordination spaces. It would be interesting to consider other ($\ell_1$, $\ell_\infty$) pointwise norms.

\changedJ{
\vspace{0.1in}
In the next section we consider point robots navigating on a plane.
The configuration space of each robot is the Euclidean plane. For such a configuration space for the individual robots, the result of Theorem~\ref{thm:coordDehn} may not hold, since the fundamental group of the coordination space may not be biautomatic. Nevertheless, we can construct a presentation of the fundamental group, which we do, and still apply Dehn metric algorithm to it, although the Dehn algorithm may not be complete for the proposed presentation.

\mysubsection{Presentation of the Fundamental Group of a Cylindrically Deleted Coordination Space for Point Robots Navigating on a Plane}

We consider point robots navigating on a plane.
Thus, in this case $\Gamma_i$, the configuration space of the $i^{th}$ robot, is the Euclidean plane coordinatized by $(x_i,y_i)$. A collision set $\Delta_{i,j} = \{(x_i,y_i,x_j,y_j) ~|~ x_i\!=\!x_j, y_i\!=\!y_j\} \subset \Gamma_i \times \Gamma_j \simeq \mathbb{R}^4, ~1\!\leq\!i\!<\!j\!\leq\!N$, is a $2$-dimensional hyperplane embedded in the joint configuration space of the robots $i$ and $j$.

The joint configuration space  of $N$ robots (with the collision sets included) is $\overline{\Gamma} = \prod_{k=1}^N \Gamma_k \simeq \mathbb{R}^{2N}$.
The ``cylindrical'' obstacles in this configuration space created due to $\Delta_{i,j}$ are thus
\[ \mathcal{O}_{i,j} = \Delta_{i,j} \times \left( \prod_{k\neq i,j} \Gamma_k \right) = \{(x_1,y_1,x_2,y_2,\cdots,x_N,y_N) ~|~ x_i\!=\!x_j, y_i\!=\!y_j\} \subset \overline{\Gamma},\]
which are co-dimension $2$ subspaces (hyperplanes) embedded in the $2N$ dimensional joint configuration space.
Thus the coordination space is $X = \overline{\Gamma} - \bigcup_{1\leq i < j \leq N} \mathcal{O}_{i,j}$.


\mysubsubsection{Design of co-dimension $1$ manifolds, $U_*$:}
As before, we are interested in constructing a set of $(2N-1)$-dimensional 
sub-manifolds, $\{U_\alpha\}$, in $X$ such that removing all but one of these sub-manifolds will give us a space with fundamental group isomorphic to $\mathbb{Z}$. This would let us apply the generalized Van Kampen theorem as before by allowing us to construct words based on transverse intersection of paths with the surfaces, $U_*$ (which would be of co-dimension $1$ in $X$). 
We outline a general construction, and then show how that can be specialized for $N=3$.

Consider a single $(2N-2)$-dimensional obstacle $\mathcal{O}_{i,j}$, which is a hyperplane of co-dimension $2$ in $\overline{\Gamma}$.
Thus the homotopy group of $Y_{i,j} = \overline{\Gamma} - \mathcal{O}_{i,j}$ is isomorphic to $\mathbb{Z}$ and is generated by a $1$-dimensional loop that \emph{links} with $\mathcal{O}_{i,j}$ in this space.
Our first construction corresponding to the obstacle $\mathcal{O}_{i,j}$ is thus the following half-space:
\[
 \mathcal{U}_{i,j} = \{(x_1,y_1,x_2,y_2,\cdots,x_N,y_N) ~|~ x_i\!=\!x_j, y_i\!<\!y_j\}
\]
This a $(2N-1)$-dimensional (co-dimension $1$ in $\overline{\Gamma}$) half-hyperplane with $\mathcal{O}_{i,j}$ at its boundary. 
However, for another pair of indices, $(i',j'), 1\!\leq\!i'\!<\!j'\!\leq\!N$, the obstacle $\mathcal{O}_{i',j'}$ in general intersects $\mathcal{U}_{i,j}$ in a $(2N-3)$-dimensional half-hyperplane, and the space $\mathcal{U}_{i,j}$ intersects $\mathcal{U}_{i,j}$ in a $(2N-2)$-dimensional half-hyperplane.
In particular,
\[\mathcal{U}_{i,j} \cap \mathcal{O}_{i',j'} = \{(x_1,y_1,x_2,y_2,\cdots,x_N,y_N) ~|~ x_i\!=\!x_j, x_{i'}\!=\!x_{j'}, y_i\!<\!y_j, y_{i'}\!=\!y_{j'}\}\]
is of co-dimension $2$ in $\mathcal{U}_{i,j}$, and,
\[ 
\mathcal{U}_{i,j} \cap \mathcal{U}_{i',j'} = \{(x_1,y_1,x_2,y_2,\cdots,x_N,y_N) ~|~ x_i\!=\!x_j, x_{i'}\!=\!x_{j'}, y_i\!<\!y_j, y_{i'}\!<\!y_{j'}\}\]
is of co-dimension $1$ in $\mathcal{U}_{i,j}$ (a half-hyperplane) and is non-empty. This is schematically shown in Figure~\ref{fig:O-U-intersections}.


\begin{lemma}
 $X - \bigcup_{1\leq i<j\leq N} \mathcal{U}_{i,j} ~=~ \overline{\Gamma} - \bigcup_{1\leq i<j\leq N} \mathcal{O}_{i,j} - \bigcup_{1\leq i<j\leq N} \mathcal{U}_{i,j}$ ~is path connected.
\end{lemma}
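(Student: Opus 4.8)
The plan is to describe the deleted set explicitly and then connect points by hand. First I would observe that for each pair $i<j$ the union $\mathcal{U}_{i,j}\cup\mathcal{O}_{i,j}$ is exactly the closed half-hyperplane $\{x_i=x_j,\ y_i\le y_j\}$, so that, writing $X_0$ for the space in question,
\[
X_0 \;=\; X-\bigcup_{i<j}\mathcal{U}_{i,j} \;=\; \overline{\Gamma}-\bigcup_{i<j}\{x_i=x_j,\ y_i\le y_j\}.
\]
Thus a configuration lies in $X_0$ iff for every pair $i<j$ one has $x_i\ne x_j$ or $y_i>y_j$. In particular $X_0$ is open (it is $\overline{\Gamma}$ minus finitely many closed sets), and it contains the whole subset $A$ of configurations whose $x$-coordinates $x_1,\dots,x_N$ are pairwise distinct, since for such a point no deletion condition can fire.

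Second, I would reduce the problem to $A$. Because $X_0$ is open and the union of diagonals $\{x_i=x_j\}$ is nowhere dense, every $p\in X_0$ sits in a small convex ball contained in $X_0$ that meets $A$, and the straight segment from $p$ to such a point of $A$ stays in $X_0$. Hence it suffices to join any two points of $A$ by a path inside $X_0$.

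This is where the real difficulty lies: $A$ is \emph{disconnected}. Deleting the braid arrangement $\{x_i=x_j\}$ from the $x$-factor $\mathbb{R}^N$ leaves $N!$ chambers, one for each linear order of the $x_i$, so $A$ has $N!$ components, each of the form (a convex chamber in $x$) $\times\ \mathbb{R}^N$ (in $y$) and therefore individually path connected. The content of the lemma is the ability to pass between chambers while remaining in $X_0$, and this necessarily requires crossing a coincidence locus $x_a=x_b$. Precisely here the asymmetry built into $\mathcal{U}_{i,j}$ is essential: a point with $x_a=x_b$ (and all other $x$-coordinates distinct) lies in $X_0$ exactly when the lower-indexed of robots $a,b$ sits strictly above the other.

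Finally I would assemble the path. Since adjacent transpositions generate $S_N$, it is enough to realize one swap of two robots $a<b$ that are neighbours in the current $x$-ordering. Staying inside the current chamber of $A$ — where all $x_i$ are distinct, so no constraint binds — I first move the $y$-coordinates until $y_a>y_b$. I then slide $x_a$ and $x_b$ past one another within the gap separating them from their neighbours; at the unique instant $x_a=x_b$ the common value is strictly between the neighbouring $x$-coordinates, so only the pair $\{a,b\}$ coincides, and the condition $y_a>y_b$ keeps the configuration in $X_0$ throughout. Chaining such swaps connects all $N!$ chambers, and combined with the reduction above this shows $X_0$ is path connected. The one genuinely subtle point is the disconnectedness of $A$; once one recognizes that the half-space orientation of the $\mathcal{U}_{i,j}$ is exactly what licenses these chamber-crossing moves, everything else is elementary.
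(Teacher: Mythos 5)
Your proof is correct, but it takes a genuinely different route from the paper's. The paper's argument is local and patch-based: it starts from an \emph{arbitrary} path joining $\mathbf{p}^0$ to $\mathbf{p}^1$, observes that a crossing $x^\tau_i=x^\tau_j$ is unavoidable whenever the two endpoints order robots $i,j$ differently in $x$, and then perturbs only the coordinates $y^t_i,y^t_j$ in a small neighborhood of each such $\tau$ so that $y^\tau_i>y^\tau_j$ at the crossing, noting that this perturbation cannot create new intersections because the $x$-coordinates are untouched. Your argument is global and constructive: you reduce to the dense open set $A$ of configurations with pairwise distinct $x$-coordinates, identify its $N!$ path components as chambers of the braid arrangement (times $\mathbb{R}^N$ in $y$), and connect chambers by explicit adjacent-transposition moves in which the pair $\{a,b\}$ is made to satisfy $y_a>y_b$ (lower index strictly above) \emph{before} the single, isolated coincidence $x_a=x_b$ occurs. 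Both proofs hinge on the same key fact --- that the deleted set over $\{x_i=x_j\}$ is only the closed half $\{y_i\le y_j\}$, so every wall can be crossed on its free side --- but your version buys rigor that the paper's sketch glosses over: by arranging that exactly one pair coincides at a time, you avoid the issue (implicit in the paper's ``clearly, this can be prevented'') of simultaneous or index-sharing coincidences, where perturbing $y_i,y_j$ for one pair could in principle push the configuration into some other $\mathcal{U}_{i',j'}$ with $i'$ or $j'$ in $\{i,j\}$. The cost is length; the paper's perturbation sketch is shorter and generalizes immediately to any ambient path, whereas yours yields in addition a clean structural picture (the $N!$ chambers and the role of the half-space orientation of the $\mathcal{U}_{i,j}$ as the chamber-crossing license), which is precisely the geometry the subsequent lemmas and the relation set exploit.
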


\begin{proofsketch}
 Suppose $\mathbf{p}^0 = (x^0_1,y^0_1,x^0_2,y^0_2,\cdots,x^0_N,y^0_N), \mathbf{p}^1 = (x^1_1,y^1_1,x^1_2,y^1_2,\cdots,x^1_N,y^1_N) \in X$. If $x^0_i < x^0_j$, but $x^1_i \geq x^1_j$, then whichever path, $t\mapsto \mathbf{p}^t, ~t\in[0,1]$, is chosen to connect $\mathbf{p}^0$ and $\mathbf{p}^1$, there will be a $\tau$ for which $x^\tau_i = x^\tau_j$. However, if at this point, if $y^\tau_i \leq y^\tau_j$, then the path will be intersecting $\mathcal{O}_{i,j}$ or $\mathcal{U}_{i,j}$. Clearly, this can be prevented by simply altering the $y^t_i$ and $y^t_j$ in a small neighborhood of $t=\tau$, without altering any other coordinate and hence the alteration itself not leading to intersection with any other $\mathcal{U}_{i',j'}$. Thus an arbitrary pair of points in $X$ an be connected using a path that does not intersect any of the $\mathcal{U}_{i,j}$'s.
\end{proofsketch}


\begin{lemma}
%
 Any loop in $\overline{\Gamma}$ linked only with $\mathcal{O}_{i,j}$ can be homotoped into any other loop linked only with $\mathcal{O}_{i,j}$, through a sequence of loops linked to $\mathcal{O}_{i,j}$,
 \begin{enumerate}[leftmargin=20pt,labelindent=0pt,itemindent=0pt,labelsep=2pt]
  \item[i.] ~without intersecting $\mathcal{O}_{i',j'}$.
  \item[ii.] ~not without intersecting $\mathcal{U}_{i',j'}$ if either $i\!=\!i', ~j'\!<\!j$ or $i\!<\!i',~j\!=\!j'$, otherwise without intersecting $\mathcal{U}_{i',j'}$.
 \end{enumerate}

%
\end{lemma}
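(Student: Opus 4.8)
The plan is to read the statement as a family of winding-number assertions in the planar \emph{difference coordinates} of the pairs, proving the existence of the connecting homotopy first and then parts (i) and (ii). For the skeleton: since $\mathcal{O}_{i,j}$ is a codimension-$2$ linear subspace of $\overline{\Gamma}\simeq\mathbb{R}^{2N}$, the complement $\overline{\Gamma}-\mathcal{O}_{i,j}$ deformation retracts onto a circle and $\pi_1(\overline{\Gamma}-\mathcal{O}_{i,j})\simeq\mathbb{Z}$ is generated by the linking loop. Hence any two loops with linking number $1$ about $\mathcal{O}_{i,j}$ are freely homotopic through loops of constant linking number $1$, each again linked to $\mathcal{O}_{i,j}$; this supplies the required sequence of loops with no conditions attached. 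For the quantitative claims I would pass to $(u,v)=(x_i-x_j,\,y_i-y_j)$, in which $\mathcal{O}_{i,j}$ is the origin, $\mathcal{U}_{i,j}$ is the downward ray $\{u=0,\,v<0\}$, and a linking loop is a small positively-oriented circle about the origin, and likewise record $(u',v')=(x_{i'}-x_{j'},\,y_{i'}-y_{j'})$ for each competing pair. The single most useful observation is that $\mathcal{U}_{i',j'}$ is a codimension-$1$ membrane with $\partial\mathcal{U}_{i',j'}=\mathcal{O}_{i',j'}$, so the net signed number of crossings of any loop with $\mathcal{U}_{i',j'}$ equals its linking number with $\mathcal{O}_{i',j'}$.

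For part (i) I would argue constructively. The signed count of intersections of a generic connecting homotopy with the codimension-$2$ set $\mathcal{O}_{i',j'}$ equals the difference of the two end loops' linking numbers with $\mathcal{O}_{i',j'}$, which is $0$ since both link only $\mathcal{O}_{i,j}$, so the intersections occur in oppositely-signed pairs. Rather than invoke an abstract cancellation, I would standardize each loop within the complement of $\mathcal{O}_{i',j'}$ by carrying the non-shared robots of the pair $(i',j')$ far away in the plane, which unwinds any spurious winding about $\mathcal{O}_{i',j'}$ (the geometric form of reducing $b\,a\,b^{-1}$ to $a$) without disturbing the winding about $\mathcal{O}_{i,j}$; concatenating the two standardizations then gives a homotopy disjoint from every $\mathcal{O}_{i',j'}$.

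For part (ii) I would split on how $(i',j')$ meets $(i,j)$. If the index sets are disjoint, the loop and homotopy may be taken to move only robots $i,j$, so $u'$ stays constant and nonzero and $\mathcal{U}_{i',j'}$ is never met. If exactly one index is shared, $\mathcal{U}_{i',j'}$ projects into the $(u,v)$-plane as a downward ray issuing from the point $\mathbf{q}$ determined by the relative position of the non-shared robot, and a crossing of $\mathcal{U}_{i',j'}$ is exactly the circle meeting this ray. Here the orientation convention $y_a<y_b$ (for $a<b$) built into every $\mathcal{U}_{a,b}$ becomes decisive: it pins the ray to the $-v$ direction, and a short case analysis shows that whether $\mathbf{q}$ must be swept through the circle during \emph{any} admissible homotopy is governed by whether the non-shared index lies strictly between $i$ and $j$. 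I would check that this betweenness is equivalent to the two stated alternatives $i=i',\ j'<j$ and $i<i',\ j=j'$; in the complementary shared-index cases I would produce an explicit avoiding homotopy, pushing the non-shared robot so as to keep $v'>0$ (that is, $y_{i'}>y_{j'}$) throughout, exactly as in the coordinate argument of the previous lemma.

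The hard part is the forced half of (ii): showing that in the betweenness cases \emph{no} admissible homotopy can avoid $\mathcal{U}_{i',j'}$. I plan to realize this as a genuine topological obstruction by computing the relevant winding invariant of $\overline{\Gamma}-\mathcal{O}_{i,j}-\mathcal{U}_{i',j'}$ in the reduced planar model and exhibiting two linking-$1$ loops that sit in distinct classes there precisely when the non-shared index is interior to $[i,j]$; any homotopy between them must then cross $\mathcal{U}_{i',j'}$. As an independent check I would use the identification of $X$ with the configuration space $F_N(\mathbb{R}^2)$ of $N$ labelled planar points, so that $\pi_1(X)$ is the pure braid group and the linking loops are its standard generators; the forced crossings are the geometric source of the pure-braid conjugation relations, corroborating that the interaction is essential exactly for the stated index pairs and inessential otherwise.
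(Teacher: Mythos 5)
Your overall setup is the same as the paper's: the paper's proof also passes to $2$-planes transverse to $\mathcal{O}_{i,j}$ coordinatized by the differences $X_{ij}=x_i-x_j$, $Y_{ij}=y_i-y_j$, and tracks how the traces of the other obstacles (points) and membranes (downward rays, pinned by the convention $y_a<y_b$) move as the remaining robots' parameters vary; your treatment of part (i) and of the avoidable cases of (ii) matches the paper's "dodge'' (arrange $y_{i'}>y_{j'}$ at the moment $x_{i'}=x_{j'}$). One minor flaw there: in the disjoint-index case you claim the homotopy "may be taken to move only robots $i,j$, so $u'$ stays constant and nonzero.'' The two given loops are arbitrary and may place the pair $(i',j')$ with opposite signs of $u'$, so $u'$ cannot stay constant; you need the same $y$-dodge you use elsewhere. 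That is fixable.

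The genuine gap is the forced half of (ii), which you defer to a plan, and the plan attacks a statement that is false. You propose to exhibit two linking-$1$ loops lying in distinct classes of $\overline{\Gamma}-\mathcal{O}_{i,j}-\mathcal{U}_{i',j'}$, the complement of a \emph{single} membrane. But a single membrane can always be dodged: its trace on the transverse plane is one downward ray, and one can keep that ray's tip strictly below the origin at every moment its horizontal coordinate vanishes (concretely, for loops linked with $\mathcal{O}_{1,3}$ and the membrane $\mathcal{U}_{1,2}$, carry robot $2$ across far below robots $1$ and $3$). So all linking-$1$ loops are homotopic in that space and no invariant of it can separate them. The obstruction the paper actually proves --- and what its subsequent corollary needs --- is \emph{joint}: for $m<n<p$, on the plane transverse to $\mathcal{O}_{m,p}$ the traces of $\mathcal{U}_{m,n}$ and $\mathcal{U}_{n,p}$ are two \emph{same-direction} (downward) rays whose tips are antipodal through the origin, at horizontal positions $\pm(2c_n-c_{mp})$ and heights $\pm(2d_n-d_{mp})$, because the same robot $n$ enters both membranes. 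Hence when $2c_n-c_{mp}$ changes sign, whichever tip is then above the origin sweeps its ray through it: either membrane alone can be dodged, but not both simultaneously. For loops linked with $\mathcal{O}_{m,n}$, by contrast, the traces of $\mathcal{U}_{m,p}$ and $\mathcal{U}_{n,p}$ point in \emph{opposite} directions, which is exactly why those cases are avoidable. This same-direction versus opposite-direction dichotomy of the antipodal ray pair is the entire content of the forced case, and it is absent from your proposal. The pure-braid "check'' cannot substitute for it: free homotopy classes are conjugacy classes, every linking-$1$ loop of $\mathcal{O}_{i,j}$ is a conjugate of the same generator, so they are all freely homotopic in $X$, and the braid relations by themselves say nothing about which membranes a homotopy must cross.
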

\begin{proofsketch}
The proof is based on being able to construct, or an obstruction to constructing, a homotopy satisfying certain properties between \emph{small} loops (in a tubular neighborhood of $\mathcal{O}_{i,j}$) lying in a plane transverse to $\mathcal{O}_{i,j}$ and linking to it.

\vspace{0.1in}
\noindent\textit{Case I: Distinct $i,j,i'$ and $j'$:}

 We first consider the case when $i,j,i'$ and $j'$ are all distinct.
 Consider a $2$-dimensional affine plane transverse to $\mathcal{O}_{i,j}$ described by $\mathcal{P}_{i,j}(c_*,d_*) = \{(x_1,y_1,x_2,y_2,\cdots,x_N,y_N) ~|~ x_i+x_j=c_{ij}, y_i+y_j=d_{ij},~ x_k=c_k,y_k=d_k, \forall k\neq i,j \}$ (where $c_*$ and $d_*$ refers to the set of parameters describing the plane), and coordinatized by $X_{ij} = x_i - x_j$ and $Y_{ij} = y_i - y_j$ (so that $\mathcal{O}_{i,j}$ intersects the plane at its origin).
 
 The intersection of this plane with $\mathcal{O}_{i',j'}$ is, in general, empty except for carefully chosen values for the parameters (in particular, for parameters such that $c_{i'}\!=\!c_{j'},d_{i'}\!=\!d_{j'}$), when they intersect over the entire $\mathcal{P}_{i,j}(c_*,d_*)$.
 Likewise, the intersection of this plane with $\mathcal{U}_{i',j'}$ is, in general, empty except when the parameters are such that $c_{i'}\!=\!c_{j'},d_{i'}\!<\!d_{j'}$), when, again, they intersect at the entire $\mathcal{P}_{i,j}(c_*,d_*)$.
 
 Given two such affine planes, $\mathcal{P}_{i,j}(c^0_*,d^0_*)$ and $\mathcal{P}_{i,j}(c^1_*,d^1_*)$, for two different sets of parameters, one can easily choose a path, $t \mapsto (c^t_*,d^t_*), ~t\in[0,1]$, through the parameter space avoiding $c^t_{i'}=c^t_{j'},d^t_{i'}=d^t_{j'}$ simultaneously at any $t$. This gives a homotopy for a loop in $\mathcal{P}_{i,j}(c^0_*,d^0_*)$ around the origin (linking with$\mathcal{O}_{i,j}$) to a loop in $\mathcal{P}_{i,j}(c^1_*,d^1_*)$ around the origin, without intersecting $\mathcal{O}_{i',j'}$.
 
 Similarly, it is possible to choose the path such that $c^t_{i'}=c^t_{j'}$ and $d^t_{i'}<d^t_{j'}$ does not happen simultaneously.
%

\vspace{0.1in}
\noindent\textit{Case II: $i,j,i'$ and $j'$ not all distinct:}

Next consider the case when $i,j,i',j'$ are not distinct. Choose $1\leq m<n<p\leq N$ to be the non-distinct indices such that either ~$i\!=\!i'\!=\!m < j'\!=\!n < j\!=\!p$, ~or ~$i\!=\!m<i'\!=\!n<j\!=\!j'\!=\!p$. Then the possible obstacles are $\mathcal{O}_{m,n}$, $\mathcal{O}_{n,p}$ and $\mathcal{O}_{m,p}$, with the indices of any pair of obstacle not all distinct.

Consider the plane $\mathcal{P}_{m,n}(c_*,d_*)$, on which $x_m+x_n=c_{mn},y_m+y_n=d_{mn}$ and $x_p=c_p,y_p=d_p$. This intersects obstacle $\mathcal{O}_{n,p}$ at points where $x_n=x_p=c_p$ and $y_n=y_p=d_p$. This gives $x_m=c_{mn}-c_p, y_m=d_{mn}-d_p$. Thus, on the plane, the coordinates of the intersection point are $X_{mn} = c_{mn}-2c_p, Y_{mn} = d_{mn}-2d_p$.
Once again, it is possible to choose the path $(c^t_*,d^t_*)$ from $(c^0_*,d^0_*)$ to $(c^1_*,d^1_*)$ such that $c^t_{mn}-2c^t_p$ and $d^t_{mn}-2d^t_p$ are not simultaneously zero for any $t\in[0,1]$.
This argument also holds for the other two pairs of obstacles.

Again,
using the chosen coordinates on $\mathcal{P}_{m,n}(c_*,d_*)$, 
$\mathcal{P}_{m,n}(c_*,d_*)$ intersects $\mathcal{U}_{n,p}$ at the ray $X_{mn} = c_{mn}-2c_p, Y_{mn} > d_{mn}-2d_p$,
and intersects $\mathcal{U}_{m,p}$ at the ray $X_{mn} = c_{mn}-2c_p, Y_{mn} < -(d_{mn}-2d_p)$.
Because the rays point in opposite directions along the $X_{mn}$ axis, 
given the parameters $(c^0_*,d^0_*)$ and $(c^1_*,d^1_*)$, it is always possible to find the path $(c^t_*,d^t_*)$ (in particular, $d^t_p$) such that the rays of intersection with $\mathcal{U}_{n,p}$ and $\mathcal{U}_{m,p}$ on $\mathcal{P}_{m,n}(c^t_*,d^t_*)$ does not pass through the origin (if $c^0_{mn}-2c^0_p$ and $c^1_{mn}-2c^1_p$ are of opposite signs, and if $c^\tau_{mn}-2c^\tau_p = 0$ for some $\tau\in[0,1]$, this can be achieved by choosing $d^\tau_{mn}-2d^\tau_p > 0$).
Similar argument holds for intersection of $\mathcal{P}_{n,p}(c_*,d_*)$ with $\mathcal{U}_{m,n}$ and $\mathcal{U}_{m,p}$.

However, $\mathcal{P}_{m,p}(c_*,d_*)$ intersects $\mathcal{U}_{m,n}$ at the ray $X_{mp} = 2 c_n - c_{mp}, Y_{mp} < 2 d_n - d_{mp}$, and $\mathcal{U}_{n,p}$ at the ray $X_{mp} = 2 c_n - c_{mp}, Y_{mp} < -(2 d_n - d_{mp})$. These rays point in the same direction. 
Clearly, given the parameters $(c^0_*,d^0_*)$ and $(c^1_*,d^1_*)$, if $2 c^0_n - c^0_{mp}$ and $2 c^1_n - c^1_{mp}$ are of opposite sign, it is not possible to find the path $(c^t_*,d^t_*)$ such that neither of these rays of intersection do not pass through the origin in $\mathcal{P}_{m,p}(c^t_*,d^t_*)$.
\end{proofsketch}

The consequence of the above Lemma is that the obstruction to fundamental group of $ X_{m,p} = X - \cup_{(i',j')\neq(m,p), 1\leq i' < j' \leq N} \mathcal{U}_{i',j'}$ being $\mathbb{Z}$ are the intersections of $\mathcal{U}_{m,p}$ with $\mathcal{U}_{m,n}$ and $\mathcal{U}_{n,p}$, for every $n$ such that $m<n<p$.
This leads us to split $\mathcal{U}_{m,p}$ by the the hyperplane at which it is intersected by $\mathcal{U}_{m,n}$ or $\mathcal{U}_{n,p}$, for every $m<n<p$.
It can however be noted that for $1\!\leq \!m\!<\!n\!<\!p\!\leq\! N$, ~$\mathcal{U}_{m,p} \cap \mathcal{U}_{m,n}$ and $\mathcal{U}_{m,p} \cap \mathcal{U}_{n,p}$ are subsets of the same $(2N-2)$-dimensional hyperplane. 
Thus we have the following splitting for $\mathcal{U}_{m,p}$ only due to its intersection with $\mathcal{U}_{m,n}$ and $\mathcal{U}_{n,p}$:
\begin{eqnarray}
 \mathcal{U}_{m,p/-} & = & \{(x_1,y_1,x_2,y_2,\cdots,x_N,y_N) ~|~ x_m\!=\!x_p \leq x_n, ~y_m\!<\!y_p\} \nonumber \\
 \mathcal{U}_{m,p/+} & = & \{(x_1,y_1,x_2,y_2,\cdots,x_N,y_N) ~|~ x_m\!=\!x_p \geq x_n, ~y_m\!<\!y_p\} \nonumber
\end{eqnarray}
In general, 
we have partitions of the form
{\small \begin{equation} 
 \displaystyle \mathcal{U}_{m,p/\sigma_{m+1},\sigma_{m+2},\cdots,\sigma_{p-1}} 
 = \left\{(x_1,y_1,x_2,y_2,\cdots,x_N,y_N) ~|~ ~y_m\!<\!y_p, 
				\left( \begin{array}{l} x_m\!=\!x_p \leq x_n ~\text{if $\sigma_n =$`$-$'} \\ x_m\!=\!x_p \geq x_n ~\text{if $\sigma_n =$`$+$'} \end{array}, \forall ~m<n<p \right) \right\}
 \label{eq:coordination-Us}
\end{equation}}
Note that the $\sigma$'s are indexed by integers from $m\!+\!1$ to $p\!-\!1$ just for convenience (instead of indexing using $1,2\cdots,p-m-1$).


\vspace{0.1in}
The following is a direct consequence

\begin{corollary}
The fundamental group of
\[ X_{i,j/\sigma_{i+1},\sigma_{j+2},\cdots,\sigma_{j-1}} \quad := \quad X ~~- \bigcup_{\begin{array}{c} \scriptstyle{ (i',j',\varsigma_{i'+1},\cdots,\varsigma_{j'-1}) \neq (i,j,\sigma_{i+1},\cdots,\sigma_{j-1}),} \\ \scriptstyle{ 1\leq i' < j' \leq N, ~~\varsigma_k\in\{+,-\}\forall i'<k'<j'} \end{array}} \!\!\!\! \mathcal{U}_{i',j'/\varsigma_{i'+1},\cdots,\varsigma_{j'-1}}\]
is isomorphic to $\mathbb{Z}$.
\end{corollary}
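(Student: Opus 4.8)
The plan is to exhibit an explicit isomorphism $\phi\colon \pi_1(X_{i,j/\sigma}) \xrightarrow{\sim} \mathbb{Z}$ realized as a signed intersection (linking) number with the single retained wall. Write $W := \mathcal{U}_{i,j/\sigma}$ and $M := X_{i,j/\sigma}$ for the space in question, so that $M \setminus W = X - \bigcup \text{(all pieces)} =: X_0$ is precisely the fully-deleted complement of the first (path-connectivity) Lemma, since the pieces $\mathcal{U}_{i,j/\sigma}$ partition each $\mathcal{U}_{i,j}$. As $W$ is cut out of $\overline{\Gamma}\simeq\mathbb{R}^{2N}$ by the single equality $x_i = x_j$ together with the strict inequality $y_i < y_j$ and the sign inequalities $x_i \lessgtr x_n$ ($i<n<j$), it is a convex, hence contractible, two-sided codimension-$1$ half-hyperplane whose entire topological boundary lies on $\mathcal{O}_{i,j}$ and on the neighbouring walls $\mathcal{U}_{i,n},\mathcal{U}_{n,j}$ — all of which are removed from $M$. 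First I would use this to define $\phi$: put a loop in general position with respect to $W$ and count crossings against the fixed co-orientation of $W$; properness of $W$ (its boundary sits in the deleted set) makes this count a homotopy invariant and $\phi$ a homomorphism, and the meridian $\ell$ of $\mathcal{O}_{i,j}$ that pierces $W$ exactly once gives $\phi(\ell)=1$, so $\phi$ is surjective.

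The core of the argument is injectivity, for which I would build the infinite-cyclic cover $\widetilde{M}\to M$ classified by $\phi$. Cutting $M$ along the two-sided wall $W$ returns $X_0$, which is path connected by the first Lemma, so $\widetilde{M}$ is an infinite linear chain $\cdots\cup_W X_0\cup_W X_0\cup_W\cdots$ of copies of $X_0$ glued along the contractible slice $W$. If $X_0$ is simply connected, then by iterated Van Kampen (each gluing locus $W$ being connected and simply connected) this chain is simply connected; being a connected cover of $M$ it is then the universal cover, and its deck group $\mathbb{Z}$ equals $\pi_1(M)$, finishing the proof. The remaining input is thus exactly Property~2-type simple-connectivity of $X_0$, and here I would feed in the loop-homotopy Lemma above: it states that any loop linking only $\mathcal{O}_{i,j}$ can be swept to any other such loop, and isolates the sole obstruction to contracting meridians — the co-oriented (same-direction) intersections of $\mathcal{U}_{m,p}$ with $\mathcal{U}_{m,n}$ and $\mathcal{U}_{n,p}$ of Case~II. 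The splitting $\mathcal{U}_{m,p}\rightsquigarrow\mathcal{U}_{m,p/\pm}$ was defined to cut $\mathcal{U}_{m,p}$ along exactly those intersection hyperplanes, so each individual piece $W$ lies on one definite side of every such cut; I would verify that this forces the two conflicting rays of Case~II onto different pieces, so the obstruction never occurs inside $M$ and every meridian not crossing $W$ contracts in $X_0$.

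I expect the main obstacle to be this last verification — that the partition into sign-labelled pieces genuinely resolves the Case~II obstruction rather than merely relocating it. Concretely, I would show that a loop $\gamma$ with $\phi(\gamma)=0$ can be homotoped, crossing by crossing, entirely off $W$: each adjacent pair of oppositely-signed crossings of $W$ bounds a small bigon that can be pushed across $W$ because the complementary walls it might snag on have been deleted, and the loop-homotopy Lemma guarantees the bigon slides without re-linking any retained obstacle. Two secondary points also need care: justifying general position and finiteness of crossings under the tameness/semialgebraicity hypothesis on the $\Delta_{i,j}$, so that $\widetilde{M}$ is assembled from finitely-describable pieces and the cut-and-glue picture is valid; and confirming $M\setminus W$ is connected (again the first Lemma), so that $\widetilde{M}$ is a single line rather than a more complicated graph of spaces. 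Once simple-connectivity of $X_0$ is in hand, the corollary is indeed the ``direct consequence'' advertised, being Property~3 of Proposition~\ref{prop:van-kampen-general} specialised to the retained piece $W$.
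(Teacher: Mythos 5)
Your route is genuinely different from the paper's. The paper gives no argument beyond declaring the corollary a ``direct consequence'' of the loop-homotopy Lemma together with the splitting of $\mathcal{U}_{m,p}$ in \eqref{eq:coordination-Us}: once the Case~II obstruction (the two same-direction rays) is cut away, all small loops linking $\mathcal{O}_{i,j}$ through the retained piece become mutually homotopic, and this is taken to identify $\pi_1$ with $\mathbb{Z}$. You instead build the isomorphism explicitly --- signed intersection with the retained wall $W=\mathcal{U}_{i,j/\sigma}$, surjectivity from a meridian, injectivity via the infinite cyclic cover assembled from copies of the fully-deleted space $X_0$ glued along the convex slice $W$. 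This is cleaner in two respects: it automatically shows the meridian has infinite order (a point the paper's sketch never addresses), and it isolates exactly what remains to be proved, namely simple connectivity of $X_0$. Your identification of $X_{i,j/\sigma}$ minus the retained wall with the space $X_0$ of the first Lemma is correct (the sign-labelled pieces do cover each $\mathcal{U}_{i,j}$), as is invoking that Lemma so that the cover is a single linear chain of copies of $X_0$.

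However, there is a genuine gap at precisely the step you flag: you propose to obtain simple connectivity of $X_0$ by ``feeding in'' the loop-homotopy Lemma, and that Lemma cannot deliver it. It is a statement only about loops that link $\mathcal{O}_{i,j}$, homotoped through loops that remain linked with $\mathcal{O}_{i,j}$; it identifies which walls such a homotopy can avoid and which it must sometimes cross, but it says nothing about contracting a loop that crosses no wall at all. Your closing bigon argument runs into the same issue: cancelling oppositely-signed crossings pushes a $\phi$-trivial loop into $X_0$, but one then still needs every loop in $X_0$ to bound, which is the very statement in question. The paper is no better on this point --- its first Lemma proves only path connectivity, and condition~2 of Proposition~\ref{prop:van-kampen-general} (simple connectivity of $X- \bigcup U_i$) is asserted for this arrangement but never proved --- yet since the validity of your covering-space argument rests entirely on this input, it needs a separate proof, for instance a direct analysis of the arrangement of closed convex ``slits'' $\overline{\mathcal{U}}_{i,j}$ in $\mathbb{R}^{2N}$. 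Until that is supplied, your argument reduces the corollary to the missing Property~2, rather than deriving it from the paper's stated lemmas.
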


\begin{figure}
   \begin{center}
   \subfigure[Schematic illustration of $\mathcal{O}_{i,j}$, $\mathcal{U}_{i,j}$ and its intersection with $\mathcal{O}_{i',j'}$, $\mathcal{U}_{i',j'}$.]{ \label{fig:O-U-intersections}
       \includegraphics[height=0.35\textwidth, trim=0 0 0 0, clip=true]{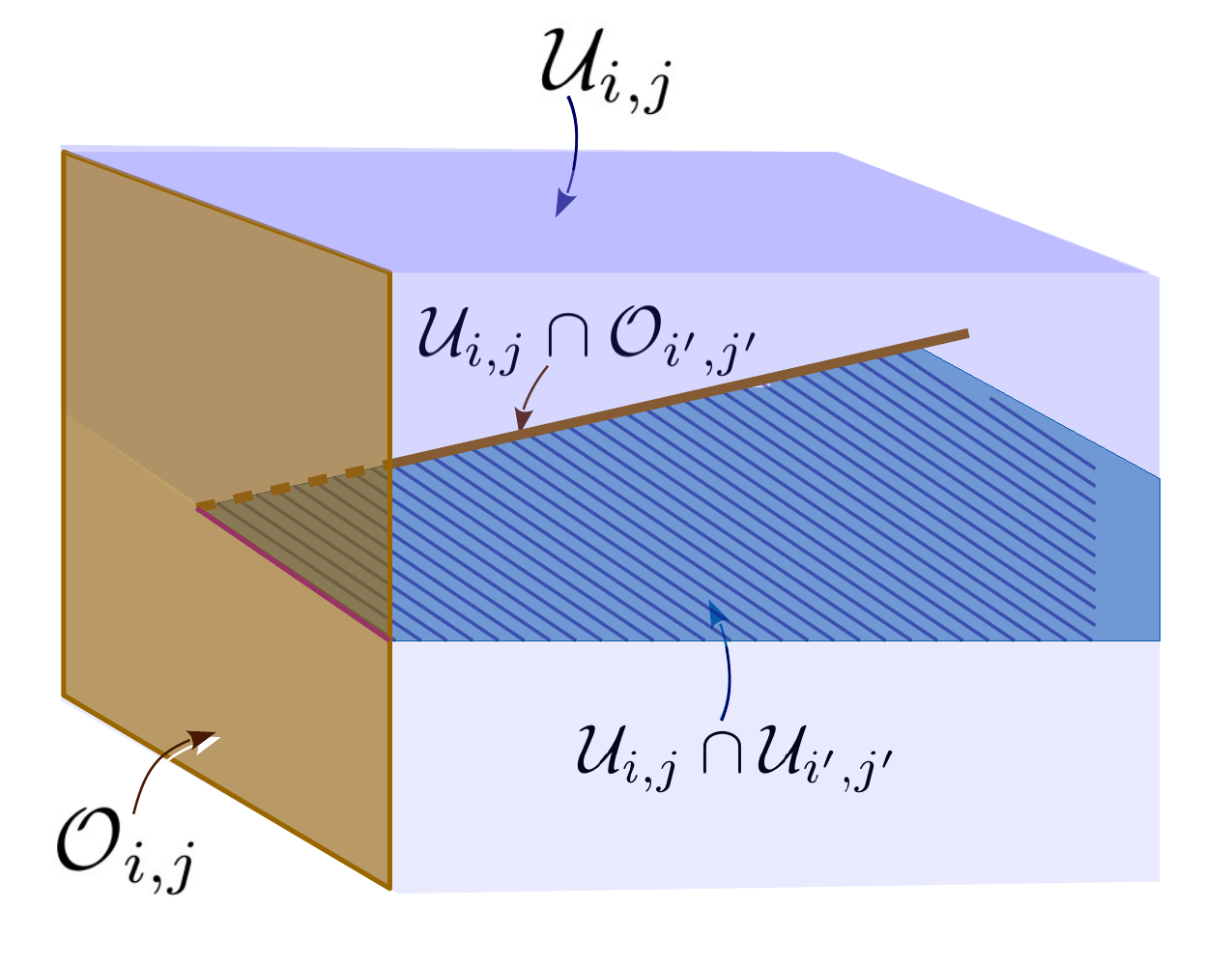}}
   \hspace{0.1in}
  \subfigure[Plane $\mathcal{P}_{m,n,p}(c_*,d_*)$ transverse to $\mathcal{U}_{m,p} \cap \mathcal{U}_{m,n} \cap \mathcal{U}_{n,p}$.]{ \label{fig:mnp-word}
       \includegraphics[height=0.35\textwidth, trim=0 0 0 0, clip=true]{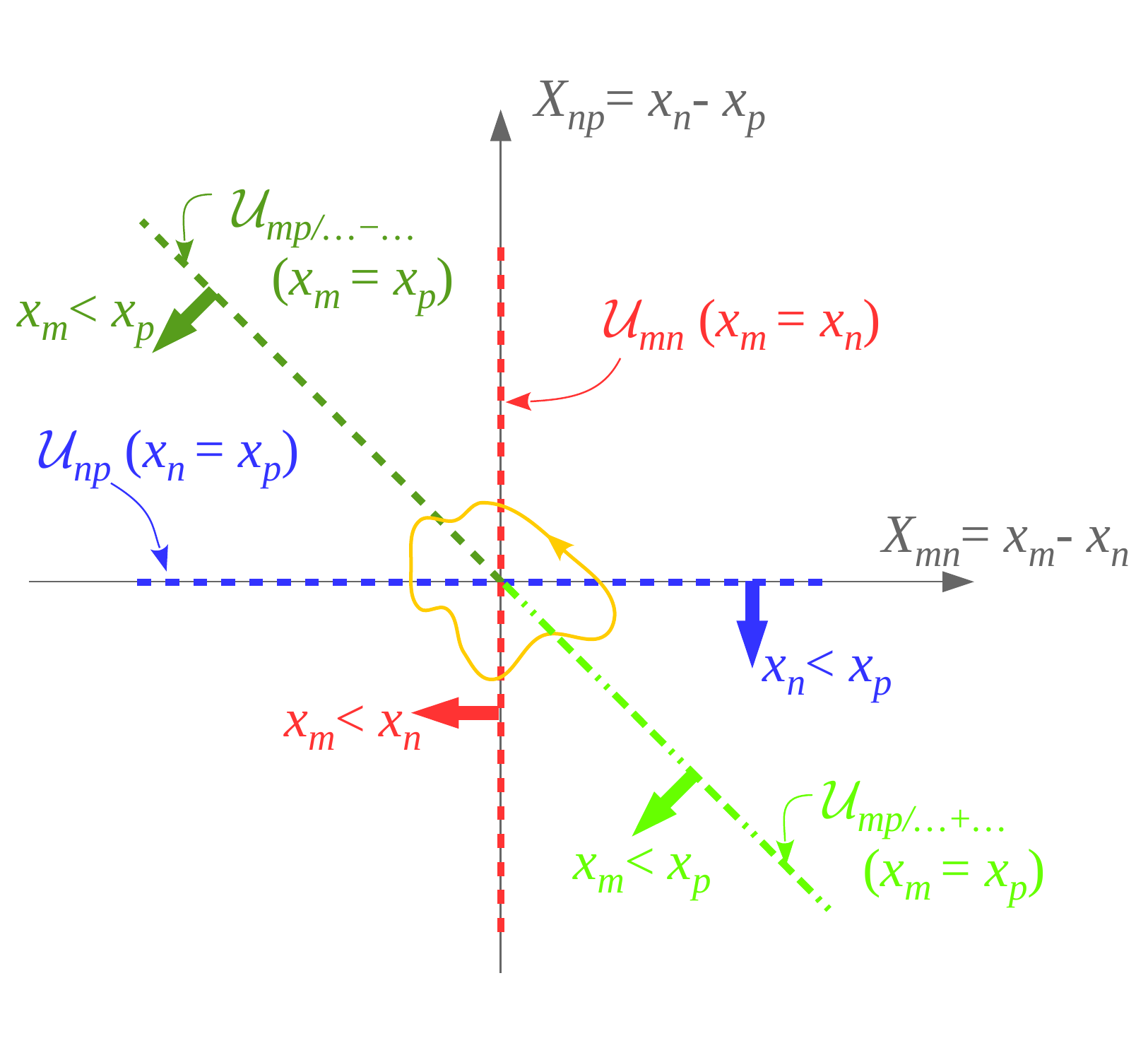}}
  \end{center}
\caption{\revA{Schematic illustration of the intersection of the constructed half-spaces, $\mathcal{U}$.}} \label{fig:OU-mnp}
\end{figure}

Thus, the $(2N-1)$ dimensional spaces, $\mathcal{U}_{m,p/\sigma_{m+1},\sigma_{m+2},\cdots,\sigma_{p-1}}$, where $1\leq i < j \leq N$, and $\sigma_k$ is either `$+$' or `$-$', satisfy the conditions of Proposition~\ref{prop:van-kampen-general}.
The set $\mymathsf{U}$ is freely generated by letters, $u_{m,p/\sigma_{m+1},\sigma_{m+2},\cdots,\sigma_{p-1}}$, corresponding to each of these spaces.

\mysubsubsection{The Relation Set:}
The relation set, $\mymathsf{R}$, contains the following types of words:
\nopagebreak[4]
\begin{enumerate}[leftmargin=20pt,labelindent=0pt,itemindent=0pt,labelsep=2pt]
 \item[i.] Due to the intersection of $\mathcal{U}_{m,p}$, $\mathcal{U}_{m,n}$ and $\mathcal{U}_{n,p}$ ($m<n<p$):~ 
           $\mathcal{U}_{m,p}$, $\mathcal{U}_{m,n}$ and $\mathcal{U}_{n,p}$ intersect at a common $(2N-2)$-dimensional hyperplane on which $x_m=x_n=x_p$, and for $y_m<y_n<y_p$. Let $\mathcal{P}_{m,n,p}(c_*,d_*)$ be a plane transverse to this intersection, coordinatized by $X_{mn} = x_m - x_n$ and $X_{np} = x_n - x_p$, so that $\mathcal{U}_{m,p} \cap \mathcal{U}_{m,n} \cap \mathcal{U}_{n,p}$ intersects it at the origin on this plane. A loop around the origin in $\mathcal{P}_{m,n,p}(c_*,d_*)$ (Figure~\ref{fig:mnp-word}) has the following word:
           \begin{equation}\label{eq:coordination-rel-word-1a} \begin{array}{l}
              \text{``} u_{m,n/\alpha_{m+1},\cdots,\alpha_{n-1}} ~\cdot~ 
              u_{m,p/\sigma_{m+1},\cdots,\sigma^{(1)}_n=-,\cdots,\sigma_{p-1}} ~\cdot~ 
              u_{n,p/\beta_{n+1},\cdots,\beta_{p-1}} ~\cdot~ \\~\quad\quad\quad
              u_{m,n/\alpha_{m+1},\cdots,\alpha_{n-1}}^{-1} ~\cdot~ 
              u_{m,p/\sigma_{m+1},\cdots,\sigma^{(2)}_n=+, \cdots,\sigma_{p-1}}^{-1} ~\cdot~ 
              u_{n,p/\beta_{n+1},\cdots,\beta_{p-1}} \text{''}
           \end{array}\end{equation}
            for any choice of signs $\alpha_a,\beta_b \in \{+,-\}, m\!<\!a\!<\!n, n\!<\!b\!<\!p$, ~and 
            ~$\sigma_k = \left\{ \begin{array}{l} \alpha_k, \text{ if $m<k<n$} \\ \beta_k, \text{ if $n<k<p$}  \end{array} \right.$ (in order to ensure non-empty intersection).
            Note that the sign value of $\sigma_n$ in the second letter of the word above is `$-$', while it is `$+$' in the fifth letter.

	    It can be noted that there exists non-empty intersection $\mathcal{U}_{m,p} \cap \mathcal{U}_{m,n}$ and $\mathcal{U}_{m,p} \cap \mathcal{U}_{n,p}$ that are outside the aforementioned common intersection of the three (the former when $y_m<y_n,y_m<y_p\leq y_n$ and the later when $y_n<y_p,y_n\leq y_m<y_p$). These intersections 
	    result in the following additional words in the relation set:
	    \begin{equation}\label{eq:coordination-rel-word-1b} \begin{array}{l}
              \text{``} u_{m,n/\alpha_{m+1},\cdots,\alpha_{n-1}} ~\cdot~ 
              u_{m,p/\sigma_{m+1},\cdots,\sigma^{(1)}_n=-,\cdots,\sigma_{p-1}} ~\cdot~ \\~\quad\quad\quad 
              u_{m,n/\alpha_{m+1},\cdots,\alpha_{n-1}}^{-1} ~\cdot~ 
              u_{m,p/\sigma_{m+1},\cdots,\sigma^{(2)}_n=+, \cdots,\sigma_{p-1}}^{-1} ~\cdot~ 
           \end{array}\end{equation}
           and
           \begin{equation}\label{eq:coordination-rel-word-1c} \begin{array}{l}
              \text{``} 
              u_{m,p/\sigma_{m+1},\cdots,\sigma^{(1)}_n=-,\cdots,\sigma_{p-1}} ~\cdot~ 
              u_{n,p/\beta_{n+1},\cdots,\beta_{p-1}} ~\cdot~ \\~\quad\quad\quad
              u_{m,p/\sigma_{m+1},\cdots,\sigma^{(2)}_n=+, \cdots,\sigma_{p-1}}^{-1} ~\cdot~ 
              u_{n,p/\beta_{n+1},\cdots,\beta_{p-1}} \text{''}
           \end{array}\end{equation}

  \item[ii.] Due to the intersection of $\mathcal{U}_{i,j}$ and $\mathcal{U}_{i',j'}$ (with $i\leq i'$), 
              where $i,j,i',j'$ are all distinct:~
	  Without loss of generality, assume either $i<i'$ or $i=i', j<j'$.
	    $\mathcal{U}_{i,j}$ and $\mathcal{U}_{i',j'}$ intersect at a $(2N-2)$-dimensional hyperplane on which $x_i\!=\!x_j, x_{i'}\!=\!x_{j'}$.
	    The letters corresponding to these spaces simply commute. Thus we have the words:
	    \begin{equation}\label{eq:coordination-rel-word-2} \begin{array}{l}
	      \text{``} u_{i,j/\sigma_{i+1},\cdots,\sigma_{j-1}} ~\cdot~ u_{i',j'/\gamma_{i'+1},\cdots,\gamma_{j'-1}} ~\cdot~ \\~\quad\quad\quad
	      u_{i,j/\sigma_{i+1},\cdots,\sigma_{j-1}}^{-1} ~\cdot~ u_{i',j'/\gamma_{i'+1},\cdots,\gamma_{j'-1}}^{-1} \text{''}
	    \end{array}\end{equation}
	    for all $\sigma_k,\gamma_l\in \{+,-\}$

\end{enumerate}

\noindent
The relation set, $\mymathsf{R}$, thus consists of all the words of the forms described in \eqref{eq:coordination-rel-word-1a}, \eqref{eq:coordination-rel-word-1b}, \eqref{eq:coordination-rel-word-1c} and \eqref{eq:coordination-rel-word-2}.

\mysubsubsection{Explicit Example for $N=3$} \label{sec:coordination-3robot-example}
We consider the simple, yet non-rival case of $N=3$ (coordination space of $3$ robots navigating on a plane).
The letters in $\mymathsf{U}$ are ~$u_{1,2}$, ~$u_{2,3}$, ~$u_{1,3/+}$ and ~$u_{1,3/-}$
corresponding to respectively crossing of the manifolds ~$\mathcal{U}_{1,2} = \{\mathbf{p} ~|~ x_1=x_2,y_1<y_2\}$, ~$\mathcal{U}_{2,3} = \{\mathbf{p} ~|~ x_2=x_3,y_2<y_3\}$, ~$\mathcal{U}_{1,3/+} = \{\mathbf{p} ~|~ x_1=x_3 > x_2,y_1<y_3\}$ and ~$\mathcal{U}_{1,3/-} = \{\mathbf{p} ~|~ x_1=x_3 < x_2,y_1<y_3\}$.
The relation set consists of the following words:
\[\begin{array}{rl}
 \mymathsf{R} = & \big\{~~ 
                  u_{1,2} ~~ u_{1,3/-} ~~ u_{2,3} ~~ u_{1,2}^{-1} ~~ u_{1,3/+}^{-1} ~~ u_{2,3}^{-1}, \\
		    & \qquad u_{1,2} ~~ u_{1,3/-} ~~  u_{1,2}^{-1} ~~ u_{1,3/+}^{-1}, \\
		    & \qquad\quad u_{1,3/-} ~~ u_{2,3} ~~ u_{1,3/+}^{-1} ~~ u_{2,3}^{-1} 
               ~~\big\}
  \end{array}
\]

For simplicity, define $a = u_{1,2}$, ~$b = u_{1,3/+}$, ~$c = u_{1,3/-}$, ~$d = u_{2,3}$.
Rewriting the relation set, $\mymathsf{R} = \{acda^{-1}b d^{-1}, a c a^{-1} b, c d b d^{-1}\}$.
This can be easily shown (by isolating $b$ by setting the last relation to identity, and substituting it in the other two relations) to be isomorphic to the group
$< a,c,d ~|~ acd(dac)^{-1}, cda (dac)^{-1}>$.
Which in turn (using substitution $p=cd,q=dac$) can be shown to be isomorphic to the group \newline 
$<a,p,q ~|~ ap(pa)^{-1}, aq(qa)^{-1}>$ ($a$ commutes with both $p$ and $q$, but $p$ and $q$ themselves do not commute). This is clearly the group $\mathbb{Z} \times (\mathbb{Z}*\mathbb{Z})$. In fact, simply using geometric arguments, it is easy to verify that the homotopy type of the coordination space of $3$ robots on a plane is indeed that of $\mathbb{S}^1 \times (\mathbb{S}^1 \vee \mathbb{S}^1)$ (see \citep{Arslan:Tro:2016} for example).
}

\mysection{Simulation Results}

\mysubsection{Knot and Link Complements}

Given obstacles $\mathcal{O}\subset \mathbb{R}^3$, and their ``skeletons'' ($1$-dimensional homotopy equivalents), $O\subseteq \mathcal{O}$ as polygons in $\mathbb{R}^3$, we first choose a projection map, $p: \mathbb{R}^3 \rightarrow \mathbb{R}^2$, for the knot/link diagram. With this information, we implemented the automated 
construction of the surfaces, $U_i$, for the Dehn presentation of the knot/link complement, and 
the symmetricized relation set, $\overline{\mymathsf{R}}$, 
by computing the self-intersections in $p(O)$. 
We then used a uniform cubical discretization of $\mathbb{R}^3-\mathcal{O}$ to construct the graph $G$ as a discrete representation of the free space, and in the $h$-augmented graph, $G_h$, we find trajectories from $(x_s,\ldq ~\rdq)$ to $(x_g,*)$. We then employ a curve shortening algorithm to shorten the obtained trajectories.
All our implementations were done in C++ programming language and visualization were done using OpenGL. The program ran on a laptop running on a Intel i7-4500U processor @ 1.80GHz with 8 GB memory.

Figure~\ref{fig:result-knot1} shows results in the complement of a trefoil knot. The inset figure shows the surfaces, $U_i$, used for Dehn presentation. The graph $G$ was constructed out of uniform $100 \!\times\! 100 \!\times\! 100$ cubical discretization of the environment.
\revA{As seen in the figure, using the proposed search-based algorithm we computed $5$ shortest paths in different homotopy classes and used curve-shortening algorithms to shorten them to obtain the piece-wise linear paths.}
The entire computation (computation of the surfaces, the symmetricized relation set $\mymathsf{R}$, and computation of the $5$ shortest trajectories) required about $8.1$\,s. 
Likewise, Figure~\ref{fig:result-link1} shows results in the complement of a simple Hopf link, with the same discretization of the environment, and total computation time of about $8.2$\,s.

Figure~\ref{fig:result-complex} shows a much more complex obstacle involving a torus knot linked to a genus-$2$ obstacle, and the entire computation of $20$ trajectories took about $2.6$\,s.
\revA{While in a complex space as this, the completeness of Dehn algorithm is not guaranteed, numerical computation still gives us reasonable results as can be seen in this simulation result.

It is interesting to observe that computing shortest paths in $5$ different classes in the environments of Figures~\ref{fig:result-knot1} and \ref{fig:result-link1} took significantly longer than the computation of $20$ classes in the environment of Figure~\ref{fig:result-complex}.
This is because in Figure~\ref{fig:result-complex}, although much more complex, the obstacles themselves occupy a large part of the environment. This significantly reduces the number of vertices and edges in the discrete graph representing the free configuration space than the trefoil knot and Hopf link complements. Thus the graph search runs much faster in the configuration space of Figure~\ref{fig:result-complex}.
}

\begin{figure}
   \begin{center}
  \subfigure[$5$ trajectories in a trefoil knot complement.]{ \label{fig:result-knot1}
   \fbox{\includegraphics[height=0.26\textwidth, trim=0 0 0 0, clip=true]{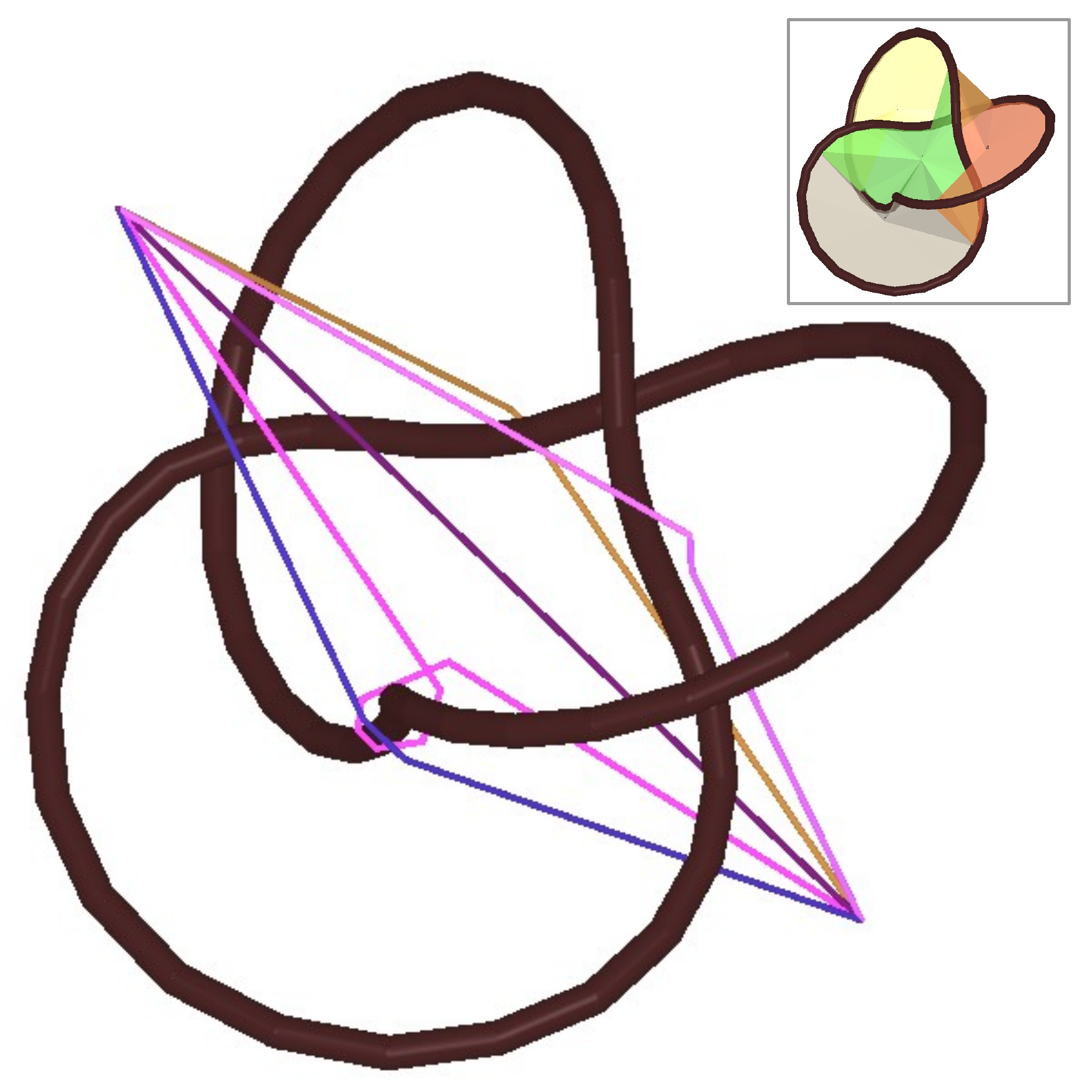}}
    } \hspace{0.01in}
  \subfigure[$5$ trajectories in a Hopf link complement.]{ \label{fig:result-link1}
   \fbox{\includegraphics[height=0.26\textwidth, trim=0 0 0 0, clip=true]{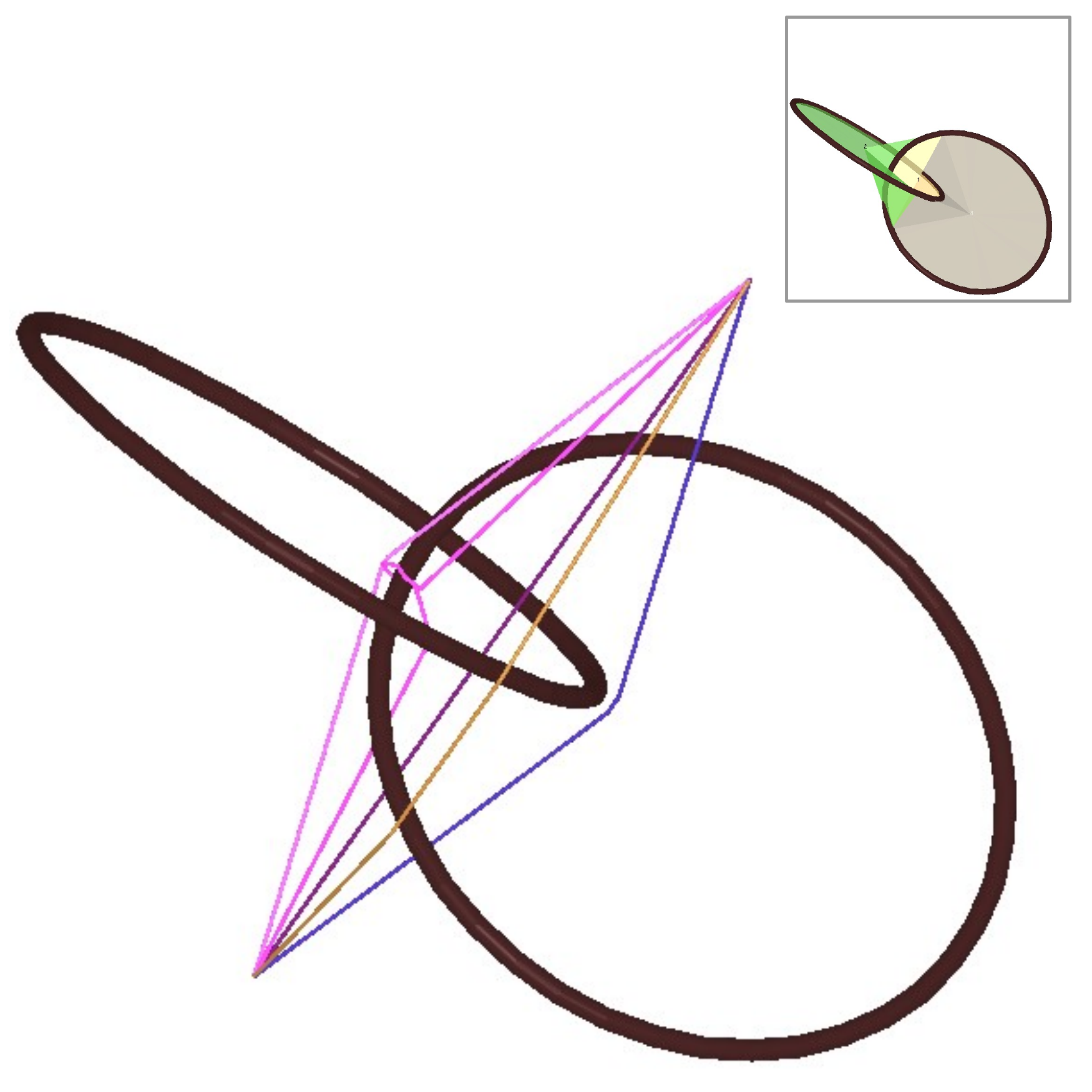}}
    } \hspace{0.01in}
  \subfigure[$20$ trajectories in the complement of a $(3,8)$ torus knot linked to a genus-$2$ torus.]{ \label{fig:result-complex}
   \fbox{\includegraphics[height=0.26\textwidth, trim=0 0 0 0, clip=true]{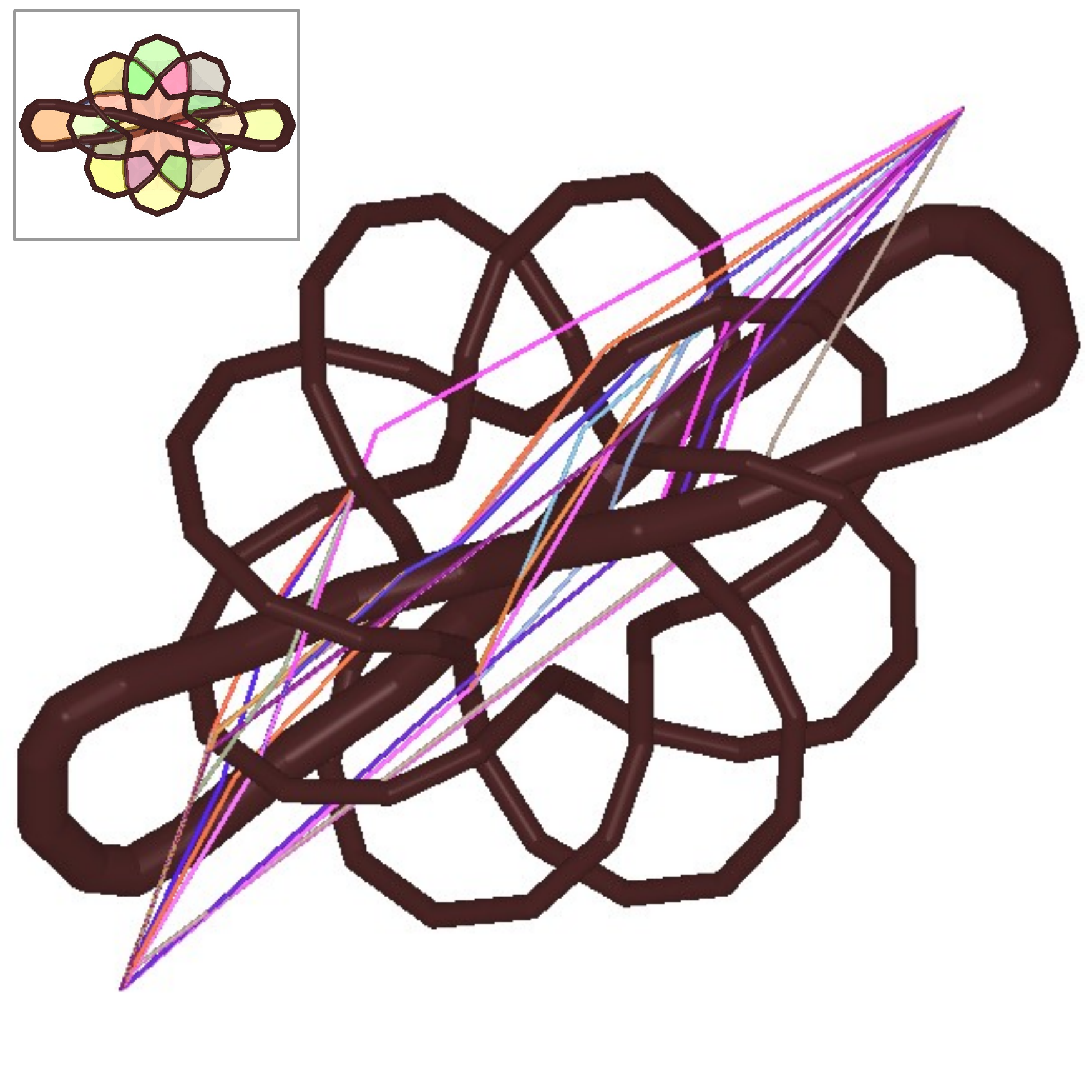}}
    }
  \end{center}
\caption{Optimal trajectories (in discrete graph representation, followed by curve shortening) in different homotopy classes in complements of knots and links. Insets show the surfaces, $U_i$.} \label{fig:results-knots-links}
\end{figure}

\changedJ{
\mysubsection{Coordination Space of $3$ Robots Navigating on a Plane}

Similar to the method described in Section~\ref{sec:graph-search}, we implemented the graph search based approach for finding optimal paths in the coordination space of $3$ robots navigating on a planar region.
We constructed a graph using a uniform hyper-cubical discretization of the configuration space, $\overline{\Gamma}=\mathbb{R}^6$, placing a vertex in each cell, and establishing edges between neighboring cells that correspond to robot motions parallel to the coordinate axes.
The optimality criteria in the A* search was chosen to be the sum of the total lengths of the paths (which, because of the chosen discretization, is the length due to the Manhattan metric on the plane) traversed by the robots. 
\revA{The $h$-augmented graph construction from this graph is similar to what has been described in Section~\ref{sec:graph-search}. However for checking whether or not two $h$-signatures are the same, we use the Dehn algorithm on the presentation described in Example~\ref{sec:coordination-3robot-example}.}

Figure~\ref{fig:results-coordination} \revA{(also, in attached multimedia file)} shows paths found in $5$ different homotopy classes for the coordination space of $3$ robots navigating on a plane. With the plane discretized into $7\times 7$ grid, and the degree of each vertex in the corresponding joint configuration space being $124$ (allowing each robot to move north, south, east or west, or stay in place), the computation of paths in $5$ different homotopy classes in the coordination space took about $31.7$\,s.
\revA{A careful observation of the figures will indeed reveal that the paths shown in each of the Figures~\ref{fig:results-coordination} (a)-(e) correspond to the robots taking distinct homotopy classes in the joint configuration space.}


\begin{figure}
   \begin{center}
  \subfigure[First homotopy class. \emph{Left-most figure:} Initial configuration of the robots, with the robots' indices labeled. \emph{Right-most figure:} Final configuration of the robots. Intermediate figures illustrate the paths taken by the robots. Word corresponding to this class is ``$u_{1,3/-}^{-1} ~\cdot~ u_{1,2}^{-1}$'' ]{ \label{fig:result-coordination1-0}
   \fbox{\includegraphics[height=0.18\textwidth, trim=0 0 0 0, clip=true]{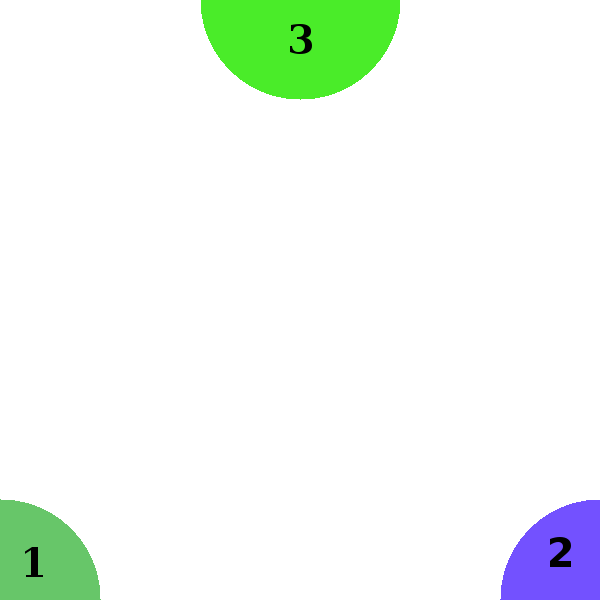}}
   \fbox{\includegraphics[height=0.18\textwidth, trim=0 0 0 0, clip=true]{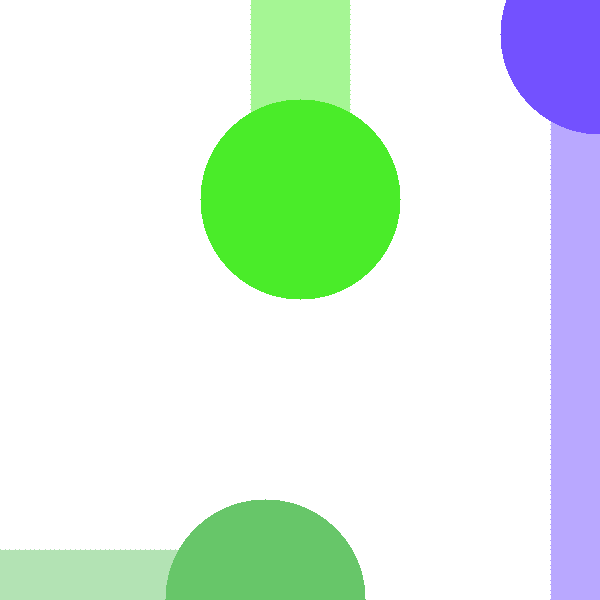}}
   \fbox{\includegraphics[height=0.18\textwidth, trim=0 0 0 0, clip=true]{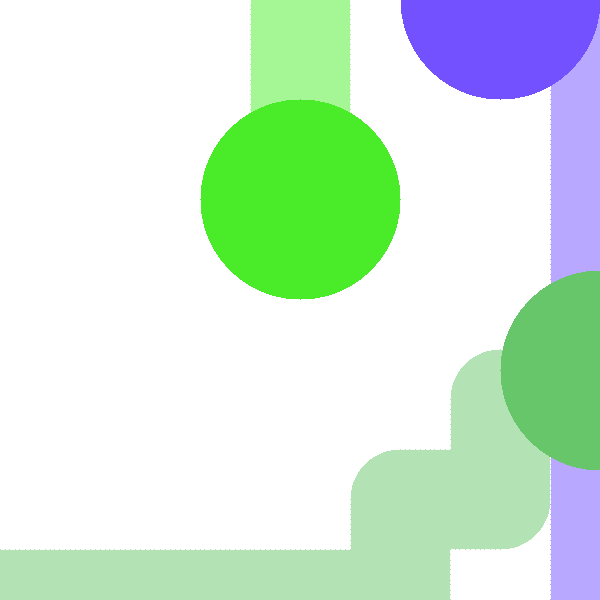}}
   \fbox{\includegraphics[height=0.18\textwidth, trim=0 0 0 0, clip=true]{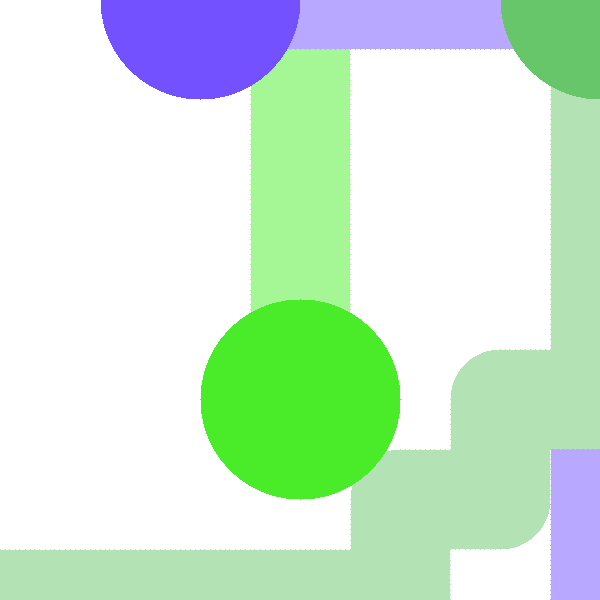}}
   \fbox{\includegraphics[height=0.18\textwidth, trim=0 0 0 0, clip=true]{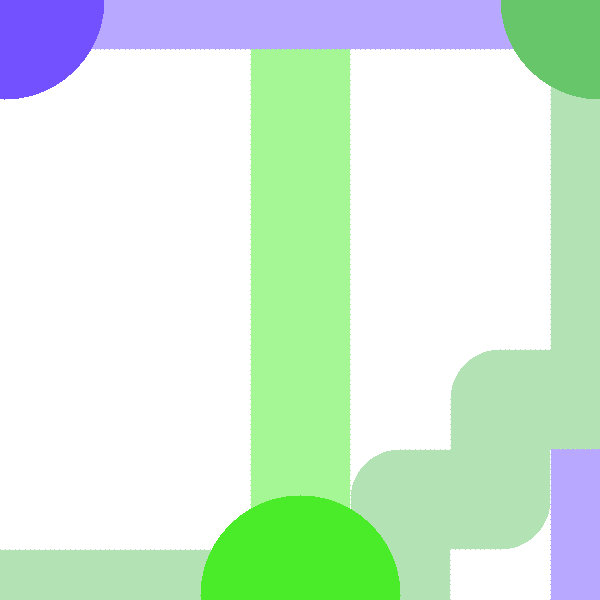}}
    }
   \subfigure[Second homotopy class corresponding to word ``$u_{2,3}$''.]{ \label{fig:result-coordination1-1}
   \fbox{\includegraphics[height=0.18\textwidth, trim=0 0 0 0, clip=true]{figures/Hx_init.png}}
   \fbox{\includegraphics[height=0.18\textwidth, trim=0 0 0 0, clip=true]{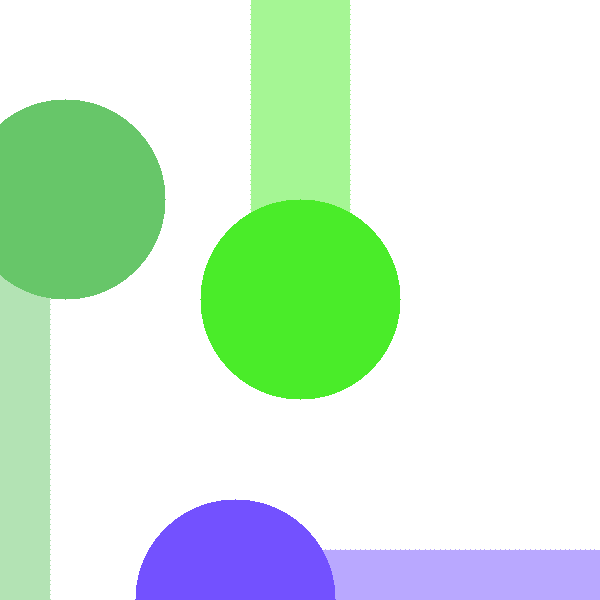}}
   \fbox{\includegraphics[height=0.18\textwidth, trim=0 0 0 0, clip=true]{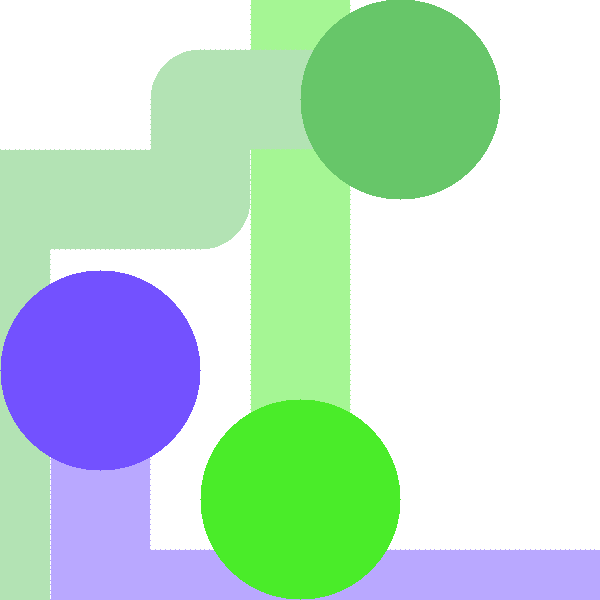}}
   \fbox{\includegraphics[height=0.18\textwidth, trim=0 0 0 0, clip=true]{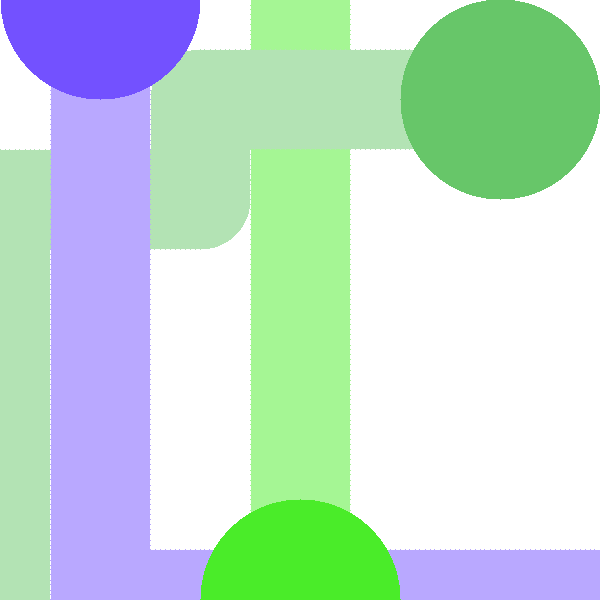}}
   \fbox{\includegraphics[height=0.18\textwidth, trim=0 0 0 0, clip=true]{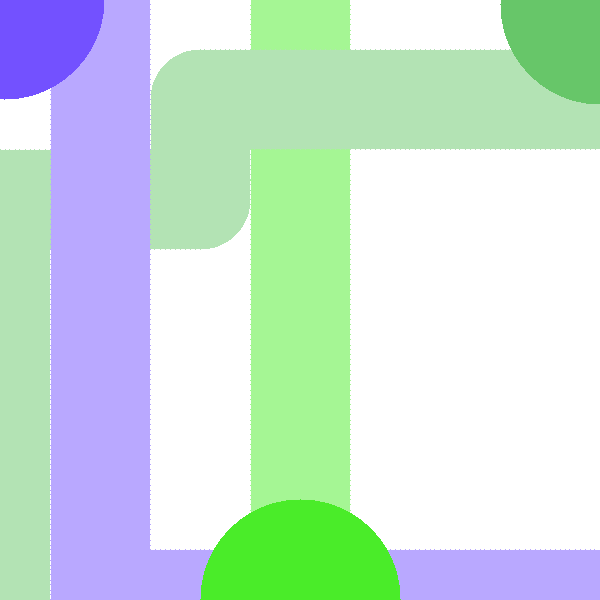}}
    }
   \subfigure[Third homotopy class corresponding to word ``$u_{2,3} ~\cdot~ u_{1,3/+}^{-1}$'']{ \label{fig:result-coordination1-2}
   \fbox{\includegraphics[height=0.18\textwidth, trim=0 0 0 0, clip=true]{figures/Hx_init.png}}
   \fbox{\includegraphics[height=0.18\textwidth, trim=0 0 0 0, clip=true]{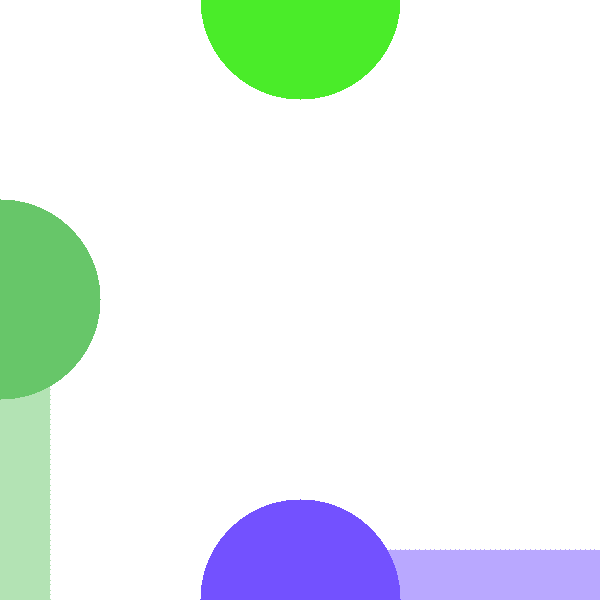}}
   \fbox{\includegraphics[height=0.18\textwidth, trim=0 0 0 0, clip=true]{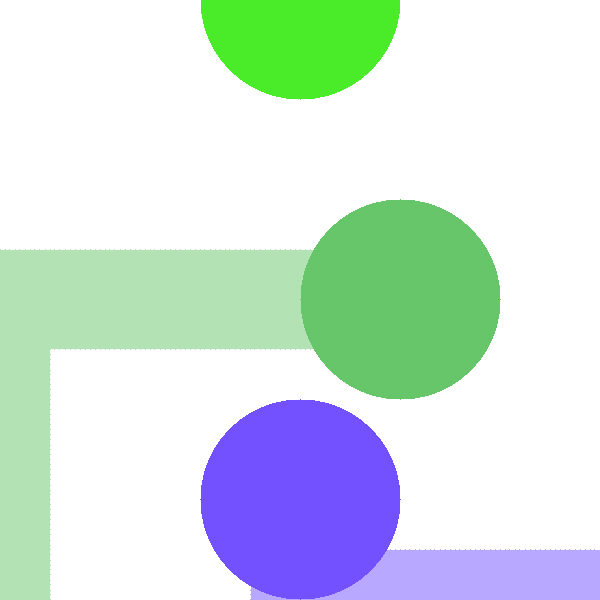}}
   \fbox{\includegraphics[height=0.18\textwidth, trim=0 0 0 0, clip=true]{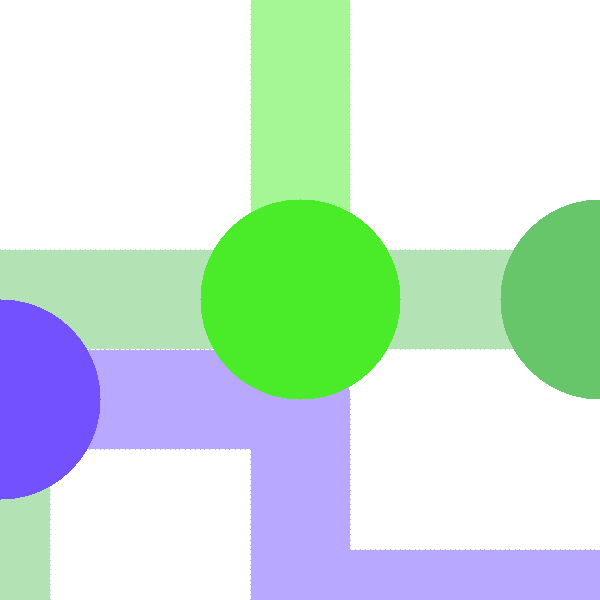}}
   \fbox{\includegraphics[height=0.18\textwidth, trim=0 0 0 0, clip=true]{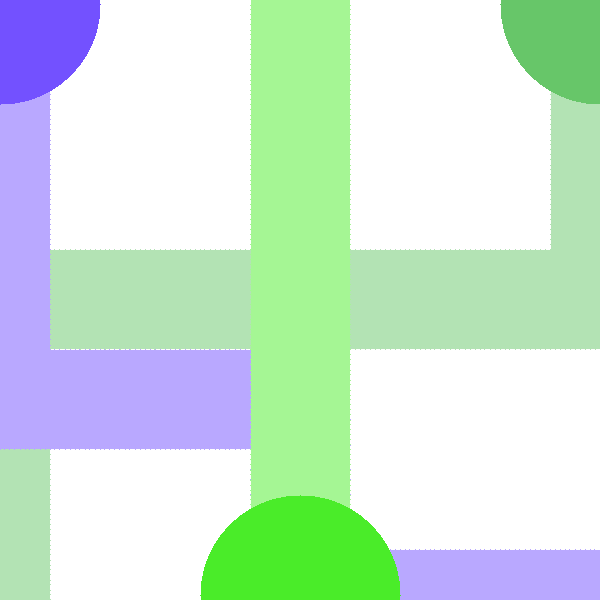}}
    }
    \subfigure[Fourth homotopy class corresponding to word ``$u_{2,3} ~\cdot~ u_{1,3/+}^{-1} ~\cdot~ u_{2,3}^{-1}$'']{ \label{fig:result-coordination1-2}
   \fbox{\includegraphics[height=0.18\textwidth, trim=0 0 0 0, clip=true]{figures/Hx_init.png}}
   \fbox{\includegraphics[height=0.18\textwidth, trim=0 0 0 0, clip=true]{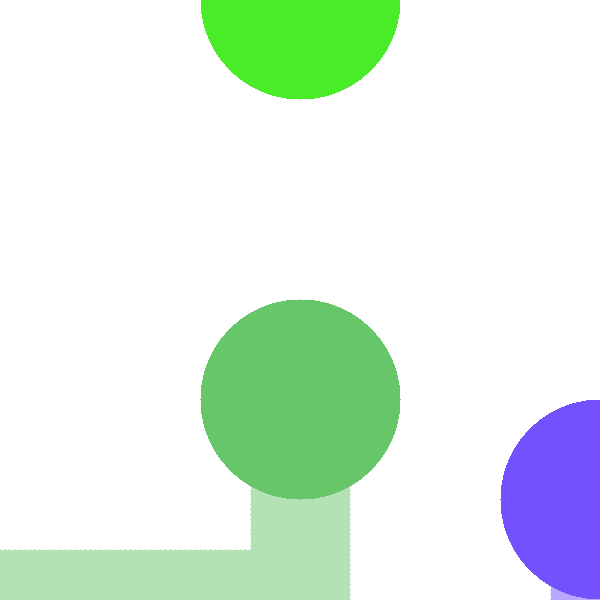}}
   \fbox{\includegraphics[height=0.18\textwidth, trim=0 0 0 0, clip=true]{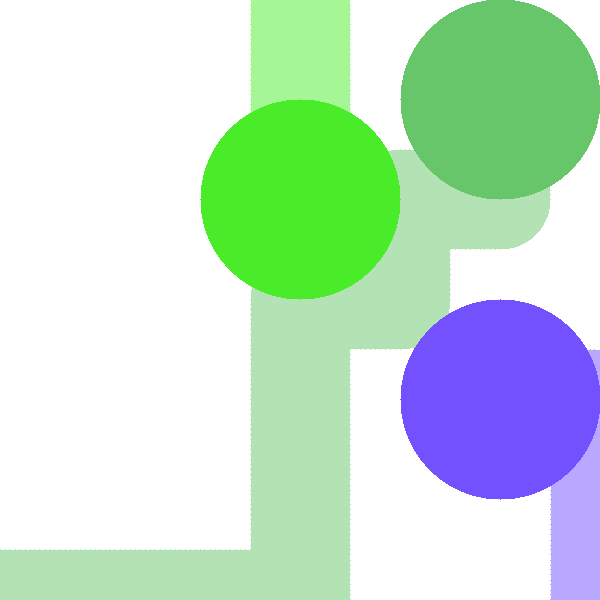}}
   \fbox{\includegraphics[height=0.18\textwidth, trim=0 0 0 0, clip=true]{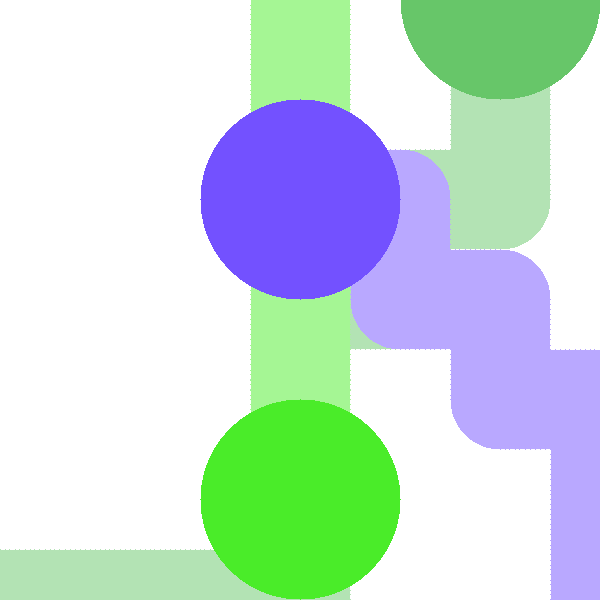}}
   \fbox{\includegraphics[height=0.18\textwidth, trim=0 0 0 0, clip=true]{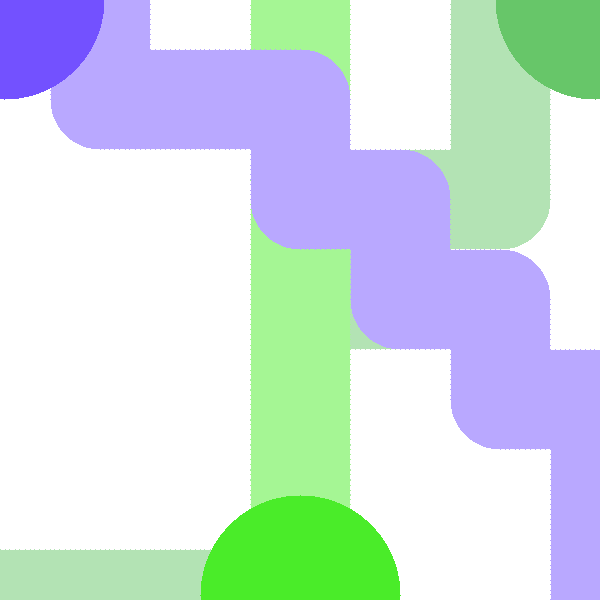}}
    }
  \subfigure[Fifth homotopy class corresponding to word ``$~$'']{ \label{fig:result-coordination1-2}
   \fbox{\includegraphics[height=0.18\textwidth, trim=0 0 0 0, clip=true]{figures/Hx_init.png}}
   \fbox{\includegraphics[height=0.18\textwidth, trim=0 0 0 0, clip=true]{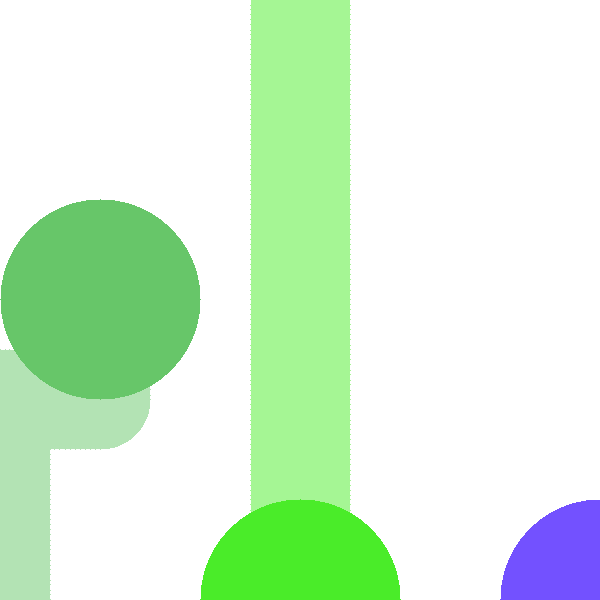}}
   \fbox{\includegraphics[height=0.18\textwidth, trim=0 0 0 0, clip=true]{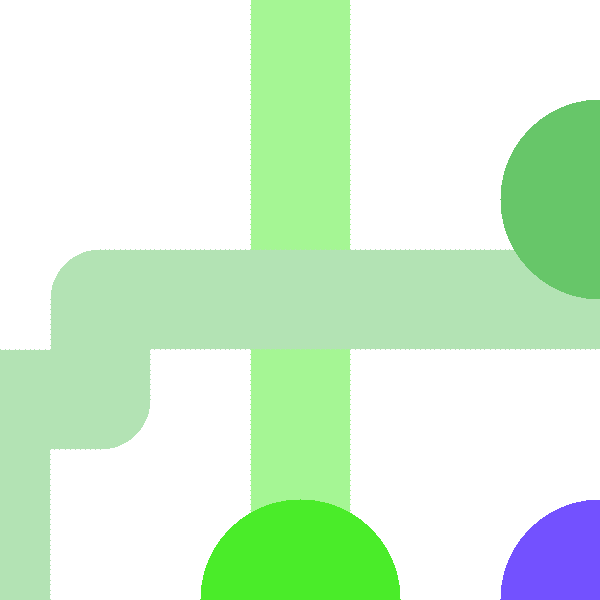}}
   \fbox{\includegraphics[height=0.18\textwidth, trim=0 0 0 0, clip=true]{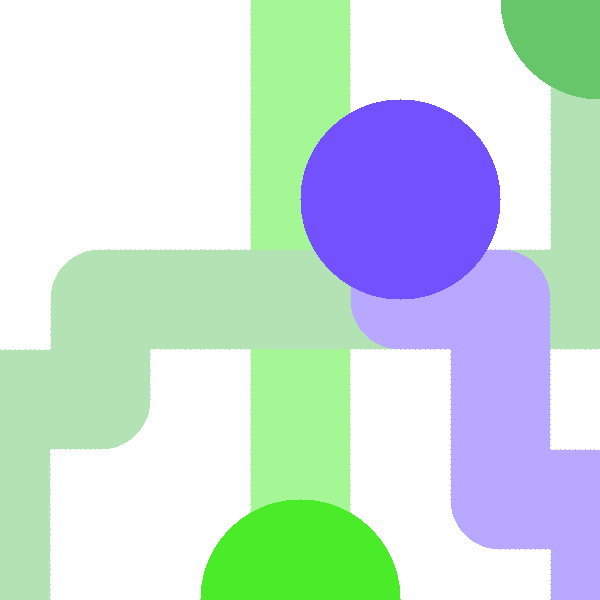}}
   \fbox{\includegraphics[height=0.18\textwidth, trim=0 0 0 0, clip=true]{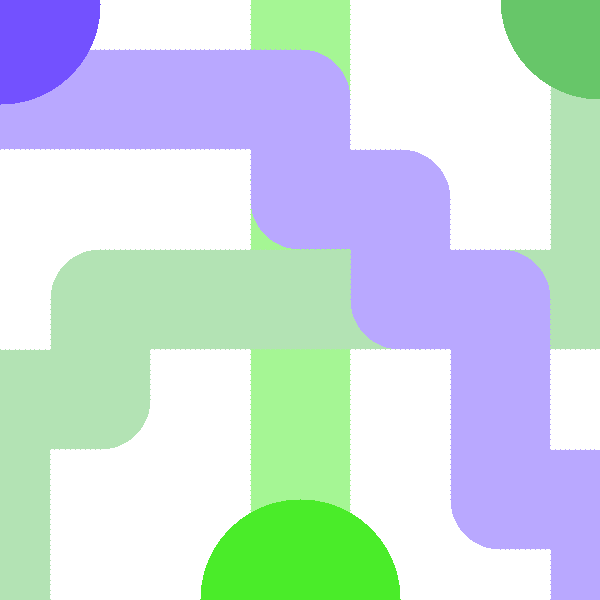}}
    }
  \end{center}
 \caption{Paths in $5$ different homotopy classes in the coordination space of $3$ robots navigating on a plane for an initial configuration to a final goal configuration. The classes found are in ascending order of the sum of the lengths of the paths of the three robots (where length is induced by the Manhattan metric on the plane).} \label{fig:results-coordination}
\end{figure}
}

\revA{
\mysubsection{Some Remarks on Number of Homotopy Classes:}

As described earlier, we compute the paths in the different homotopy classes by executing graph search algorithms (Dijkstra's/A*) on the $h$-augmented graph, $G_h$. As a result we obtain the paths in the different homotopy classes in order of their path lengths/costs.
Technically, the number of homotopy classes in any of the configuration spaces described earlier infinite. This is because a path can loop/wind round an obstacle arbitrarily many times, thus creating arbitrarily many different homotopy classes.
In the simulations we restrict ourselves to the computation of the first few homotopy classes in each configuration space since those are the classes that are most relevant in robotics applications.

}

\changedJ{

\mysection{Conclusion and Future Work}

We presented explicit construction of presentations of the fundamental group of two different classes of spaces: Knot/link complements in Euclidean $3$-dimensional spaces, and cylindrically deleted coordination space of multiple robots. We thus used graph search-based optimal path planning method for computing optimal paths in different homotopy classes in such environments.
\revA{These spaces are highly relevant in many robot motion planning problems, for example unmanned aerial vehicles (UAVs) navigating inside buildings, and multiple ground robots navigating on a plane. Being able to compute optimal paths in different homotopy classes allow us to efficiently solve motion planning problems for complex systems such as tethered robots and systems involving cables, and design strategies for effectively deploying groups of robots in coverage and exploration tasks.}

\revA{Moving forward, we will use the developed technique for homotopy path planning in 3-dimensional spaces with obstacles for computing optimal traversable paths for tethered unmanned aerial vehicles (UAVs).
One of the greatest challenges currently faced by UAVs is the limited battery life, which in turn limits the UAVs' flight time. Having the UAV tethered to a power supply will help alleviate the problem. But in that case one needs to carefully plan paths that satisfy the cable length constraints and prevents the cable from getting tangled in obstacles. 
%
\revB{Since a real UAV experiment will involve significant amount of investment in terms of equipments, time and development of low-level controllers, we believe an experiment as that is outside the scope of the current paper and will require significant deviation from the theoretical \& algorithmic focus of the current paper. We thus propose such an experiment as a future work. But we believe that}
the new algorithmic tools proposed in this paper will \revB{be helpful in solving this real} problem in 3-d.
}

\mysection{Acknowledgements}

The authors acknowledge the support of federal contracts FA9550-12-1-0416 and FA9550-09-1-0643.
The first author acknowledges the support of ONR grant number N00014-14-1-0510 and University of Pennsylvania subaward number 564436.

}


\bibliographystyle{plainnat}
\bibliography{knot_planning}

\end{document}